\documentclass[10pt,twocolumn,letterpaper]{article}
\pdfoutput=1
\usepackage{cvpr}
\usepackage{times}
\usepackage{epsfig}
\usepackage{graphicx}
\usepackage{amsmath}
\usepackage{amssymb}

\usepackage{amsthm}
\usepackage{enumerate}
\usepackage{subcaption}
\usepackage{multirow}
\usepackage{pbox}
\usepackage{array}
\usepackage{comment}

\usepackage{rotating}
\usepackage{capt-of}% or \usepackage{caption}
\usepackage{caption}
\captionsetup{skip=0pt}

\usepackage{algorithm}
\usepackage{algpseudocode}
\usepackage{comment}
\usepackage{longtable}
\usepackage{booktabs}       % professional-quality tables
\usepackage{xcolor,colortbl}

\setlength{\dbltextfloatsep}{0.2cm}
\setlength{\textfloatsep}{0.2cm}
\setlength{\intextsep}{0.2cm}

\definecolor{Gray}{gray}{0.85}
\definecolor{LightCyan}{rgb}{0.88,1,1}
\newcolumntype{a}{>{\columncolor{Gray}}r}
\newcolumntype{b}{>{\columncolor{white}}r}
\newcolumntype{d}{>{\columncolor{Gray}}c}

% \usepackage{footmisc}

% Include other packages here, before hyperref.

% If you comment hyperref and then uncomment it, you should delete
% egpaper.aux before re-running latex.  (Or just hit 'q' on the first latex
% run, let it finish, and you should be clear).
%\usepackage[breaklinks=true,bookmarks=false]{hyperref}
\usepackage[pagebackref=true,breaklinks=true,letterpaper=true,colorlinks,bookmarks=false,
			pdfauthor={D. Khue Le-Huu},
			pdftitle={Continuous Relaxation of MAP Inference: A Nonconvex Perspective},
			pdfsubject={Computer Vision and Pattern Recognition},
			pdfkeywords={map inference, graphical models, markov random fields, mrfs, admm, nonconvex relaxation}]{hyperref}

\cvprfinalcopy % *** Uncomment this line for the final submission

\newcommand\makenewtitle[1]
   {\newpage\twocolumn[{
   \null
   \vskip .375in
   \begin{center}
      {#1 \par}
      % additional two empty lines at the end of the title
      \vspace*{24pt}
      {
      \large
      \lineskip .5em
      \begin{tabular}[t]{c}
         \ifcvprfinal\Author\else Anonymous CVPR submission\\
         \vspace*{1pt}\\%This space will need to be here in the final copy, so don't squeeze it out for the review copy.
Paper ID \cvprPaperID \fi
      \end{tabular}
      \par
      }
      % additional small space at the end of the author name
      \vskip .5em
      % additional empty line at the end of the title block
      \vspace*{12pt}
   \end{center}}]
   }

% ---------------------------------------------------------------

%\newtheoremstyle{note}% name
%{3pt}% Space above
%{3pt}% Space below
%{\itshape}% Body font
%{}% Indent amount
%{\bfseries}% Theorem head font
%{}% Punctuation after theorem head
%{.5em}% Space after theorem head
%{}% Theorem head spec (can be left empty, meaning ‘normal’)
%\theoremstyle{note}
\newtheorem{theorem}{Theorem}

\newtheorem{lemma}{Lemma}
\newtheorem*{lemma*}{Lemma}
\newtheorem{proposition}{Proposition}

\newtheorem*{subproblem*}{Subproblem}

\newtheorem*{problem*}{Problem}

\newtheorem*{conjecture*}{Conjecture}

%\newtheoremstyle{definition}% name
%{3pt}% Space above
%{3pt}% Space below
%{}% Body font
%{}% Indent amount
%{\bfseries}% Theorem head font
%{}% Punctuation after theorem head
%{.5em}% Space after theorem head
%{}% Theorem head spec (can be left empty, meaning ‘normal’)
%\theoremstyle{definition}
\newtheorem{definition}{Definition}
\newtheorem*{definition*}{Definition}

\newcommand{\argmin}{\operatornamewithlimits{argmin}}
\newcommand{\argmax}{\operatornamewithlimits{argmax}}
\newcommand{\set}[1]{\left\{ #1 \right\}}
\newcommand{\abs}[1]{\left| #1 \right|}
\newcommand{\norm}[1]{\left\| #1 \right\|_2}

\newcommand{\cC}{\mathcal{C}}
\newcommand{\cE}{\mathcal{E}}

\newcommand{\cG}{\mathcal{G}}
\newcommand{\cI}{\mathcal{I}}

\newcommand{\cL}{\mathcal{L}}
\newcommand{\cM}{\mathcal{M}}
\newcommand{\cN}{\mathcal{N}}
\newcommand{\cS}{\mathcal{S}}

\newcommand{\cV}{\mathcal{V}}
\newcommand{\cX}{\mathcal{X}}

\newcommand{\RR}{\mathbb{R}}

\renewcommand{\c}{\mathbf{c}}
\renewcommand{\d}{\mathbf{d}}

\newcommand{\p}{\mathbf{p}}
\newcommand{\s}{\mathbf{s}}

\renewcommand{\r}{\mathbf{r}}
\renewcommand{\u}{\mathbf{u}}
\renewcommand{\v}{\mathbf{v}}
\newcommand{\x}{\mathbf{x}}
\newcommand{\y}{\mathbf{y}}
\newcommand{\z}{\mathbf{z}}

\newcommand{\A}{\mathbf{A}}

\newcommand{\D}{\mathbf{D}}
\newcommand{\F}{\mathbf{F}}
\newcommand{\G}{\mathbf{G}}

\renewcommand{\S}{\mathbf{S}}

\newcommand{\X}{\mathbf{X}}
\newcommand{\Y}{\mathbf{Y}}

\newcommand{\lambdab}{\mathbf{\pmb{\lambda}}}
\newcommand{\mub}{\mathbf{\pmb{\mu}}}
\newcommand{\nub}{\mathbf{\pmb{\nu}}}

\newcommand{\diag}{\mathrm{diag}}

\newcommand\inner[1]{\left\langle #1 \right\rangle}

\newcommand{\xmrf}{\x_\textsc{mrf}}
\newcommand{\xrelax}{\x_\textsc{rlx}}
\newcommand{\pw}{\mathrm{pairwise}}

\def\cvprPaperID{2695} % *** Enter the CVPR Paper ID here

% Pages are numbered in submission mode, and unnumbered in camera-ready
\ifcvprfinal\pagestyle{empty}\fi
\begin{document}
%\setlength{\belowdisplayskip}{5pt} 
%\setlength{\belowdisplayshortskip}{3pt}
%\setlength{\abovedisplayskip}{5pt} 
%\setlength{\abovedisplayshortskip}{3pt}
%%%%%%%%% TITLE
\makeatletter
\title{Continuous Relaxation of MAP Inference: A Nonconvex Perspective}

\author{D. Khu\^e L\^e-Huu\textsuperscript{1, 2} \qquad Nikos Paragios\textsuperscript{1, 2, 3}\\
\normalsize\textsuperscript{1}CentraleSup\'elec, Universit\'e Paris-Saclay\qquad \textsuperscript{2}Inria\qquad\textsuperscript{3}TheraPanacea
\\
{\tt\small \{khue.le, nikos.paragios\}@centralesupelec.fr}
}\let\Author\@author
\makeatother

\maketitle
%\thispagestyle{empty}

%%%%%%%%% ABSTRACT
\begin{abstract} %\vspace{-4pt}
   In this paper, we study a nonconvex continuous relaxation of MAP inference in discrete Markov random fields (MRFs). We show that for arbitrary MRFs, this relaxation is tight, and a discrete stationary point of it can be easily reached by a simple block coordinate descent algorithm. In addition, we study the resolution of this relaxation using popular gradient methods, and further propose a more effective solution using a multilinear decomposition framework based on the alternating direction method of multipliers (ADMM). Experiments on many real-world problems demonstrate that the proposed ADMM significantly outperforms other nonconvex relaxation based methods, and compares favorably with state of the art MRF optimization algorithms in different settings.
\end{abstract}%\vspace{-5pt}

%%%%%%%%% BODY TEXT
\section{Introduction}
Finding the maximum a posteriori (MAP) configuration is a fundamental inference problem in undirected probabilistic graphical models, also known as Markov random fields (MRFs). This problem is described as follows.

Let $\s\in\cS = \cS_1\times\cdots\times\cS_n$ denote an assignment to $n$ discrete random variables $S_1,\ldots,S_n$ where each variable $S_i$ takes values in a finite set of states (or \emph{labels}) $\cS_i$. Let $\cG$ be a graph of $n$ nodes with the set of cliques $\cC$. Consider an MRF representing a joint distribution $p(\S):=p(S_1,\ldots,S_n)$ that factorizes over $\cG$, \ie $p(\cdot)$ takes the form:
\begin{equation}\label{eq:mrf-factorization}
p(\s) = \frac{1}{Z} \prod_{C\in\cC} \psi_C(s_C) \quad \forall \s\in\cS,
\end{equation}
where $s_C$ is the joint configuration of the variables in the clique $C$, $\psi_C$ are positive functions called \emph{potentials}, and $Z = \sum_{\s} \prod_{C\in\cC} \psi_C(\s_C)$ is a normalization factor called \emph{partition function}. 

The MAP inference problem consists of finding the most likely assignment to the variables, \ie:
\begin{equation}\label{eq:map-inference}
\s^* \in \argmax_{\s\in\cS} p(\s) = \argmax_{\s\in\cS} \prod_{C\in\cC} \psi_C(s_C).
\end{equation}
For each clique $C$, let $\cS_C=\prod_{i\in C}\cS_i$ be the set of its joint configurations and define
\begin{equation}
f_C(s_C) = -\log \psi_C(s_C)\quad \forall s_C\in\cS_C.
\end{equation}
It is straightforward that the MAP inference problem~\eqref{eq:map-inference} is equivalent to minimizing the following function, called the \emph{energy} of the MRF:
\begin{equation}\label{eq:energy}
e(\s) = \sum_{C\in\cC}f_C(s_C).
\end{equation}

MRF optimization has been constantly attracting a significant amount of research over the last decades. Since this problem is in general NP-hard~\cite{shimony1994finding}, various approximate methods have been proposed and can be roughly grouped into two classes: (a) methods that stay in the discrete domain, such as move-making and belief propagation~\cite{boykov2001fast, fix2011graph, komodakis2008performance, yedidia2005constructing}, or (b) methods that move into the continuous domain by solving convex relaxations such as quadratic programming (QP) relaxations~\cite{ravikumar2006quadratic} (for pairwise MRFs), semi-definite programming (SDP) relaxations~\cite{olsson2007solving}, or most prominently linear programming (LP) relaxations~\cite{globerson2008fixing, kappes2012bundle, kolmogorov2006convergent, kolmogorov2015new, komodakis2011mrf,martins2015ad3, savchynskyy2012efficient, sontag2012efficiently}.

While convex relaxations allow us to benefit from the tremendous convex optimization literature, and can be solved exactly in polynomial time, they often only produce real-valued solutions that need a further rounding step to be converted into integer ones, which can reduce significantly the accuracy if the relaxations are not tight. On the contrary, discrete methods tackle directly the original problem, but due to its combinatorial nature, this is a very challenging task. We refer to~\cite{kappes2015ijcv} for a recent comparative study of these methods on a wide variety of problems.

In this paper, we consider a different approach. We present a nonconvex continuous relaxation to the MAP inference problem for arbitrary (pairwise or higher-order) MRFs. Based on a block coordinate descent (BCD) rounding scheme that is guaranteed not to increase the energy over continuous solutions, we show that this nonconvex relaxation is tight and is actually equivalent to the original discrete problem. It should be noted that the same relaxation was previously discussed in~\cite{ravikumar2006quadratic} but only for \emph{pairwise} MRFs and, more importantly, was not directly solved. The significance of this (QP) nonconvex relaxation has remained purely theoretical since then. In this paper, we demonstrate it to be of great practical significance as well. In addition to establishing theoretical properties of this nonconvex relaxation for \emph{arbitrary} MRFs based on BCD, we study popular generic optimization methods such as projected gradient descent~\cite{bertsekas1999nonlinear} and Frank-Wolfe algorithm~\cite{frank1956algorithm} for solving it. These methods, however, are empirically shown to suffer greatly from the trivial hardness of nonconvex optimization: getting stuck in bad local minima. To overcome this difficulty, we propose a multilinear decomposition solultion based on the alternating direction method of multipliers (ADMM). Experiments on different real-world problems show that the proposed nonconvex based approach can outperform many of the previously mentioned methods in different settings.

The remainder of this paper is organized as follows. Section~\ref{sec:notation} presents necessary notation and formulation for our approach. In Section~\ref{sec:relaxation}, the nonconvex relaxation is introduced and its properties are studied, while its resolution is presented in Section~\ref{sec:solving} together with a convergence analysis in Section~\ref{sec:convergence-analysis}. Section~\ref{sec:experiments} presents experimental validation and comparison with state of the art methods. The last section concludes the paper.

\section{Notation and problem reformulation}\label{sec:notation}
It is often convenient to rewrite the MRF energy $e(\s)$~\eqref{eq:energy} using the indicator functions of labels assigned to each node. Let $\cV\subset\cC$ denote the set of nodes of the graph $\cG$. For each $i\in\cV$, let $x_i:\cS_i\to\set{0,1}$ be a function defined by $x_i(s) = 1$ if the node $i$ takes the label $s\in\cS_i$, and $x_i(s)=0$ otherwise. It is easily seen that minimizing $e(\s)$ over $\cS$ is equivalent to the following problem, where we have rewritten $e(\s)$ as a function of $\set{x_i(\cdot)}_{i\in\cV}$\footnote{In the standard LP relaxation, the product $\prod_{j\in C}x_j(s_j)$ in~\eqref{prob:mrf} is replaced with new variables $x_C(s_C)$, seen as the indicator function of the joint label assigned to the clique $C$, and the following \emph{local consistency} constraints are added: $\forall j\in C:\ \sum_{l_{C\setminus j}}x_C(s_C) = x_j(s_j)\quad \forall s_j\in\cS_j.$
}:
\begin{equation}\label{prob:mrf}
\begin{aligned} 
\mbox{min}\quad & E(\x) =  \sum_{C\in\cC} \sum_{s_C\in\cS_C}f_C(s_C)\prod_{j\in C}x_j(s_j) \\ 
\mbox{s.t.}\quad & \sum_{s\in\cS_i}x_i(s) = 1 	\quad \forall i\in\cV,\\ 
& x_i(s)\in\set{0,1}	\quad \forall s\in\cS_i, \forall i\in\cV. 
\end{aligned}
\end{equation}
For later convenience, a further reformulation using tensor notation is needed. Let us first give a brief review of tensor.

A real-valued $D\textsuperscript{th}$-order tensor $\F$ is a multidimensional array belonging to $\RR^{n_1\times n_2 \times\cdots \times n_D}$ (where $n_1,n_2,\ldots,n_D$ are positive integers). Each dimension of a tensor is called a \emph{mode}. The elements of $\F$ are denoted by $F_{i_1i_2\ldots i_D}$ where $i_d$ is the index along the mode $d$.

A tensor can be multiplied by a vector at a specific mode. Let $\mathbf{v}=(v_1,v_2,\ldots,v_{n_d})$ be an $n_d$ dimensional vector. The \emph{mode-$d$ product} of $\F$ and $\v$, denoted by $\F\bigotimes_d\v$, is a $(D-1)\textsuperscript{th}$-order tensor $\G$ of dimensions $n_1\times\cdots  \times n_{d-1}\times n_{d+1}\times\cdots \times n_D$ defined by 
\begin{equation}
G_{i_1\ldots i_{d-1}i_{d+1}\ldots i_D} = \sum_{i_d=1}^{n_d} F_{i_1\ldots i_{d}\ldots i_D}v_{i_d}\quad\forall i_{[1,D]\setminus d}.
\end{equation}
Note that the multiplication is only valid if $\v$ has the same dimension as the mode $d$ of $\F$.

The product of a tensor and multiple vectors (at multiple modes) is defined as the consecutive product of the tensor and each vector (at the corresponding mode). The order of the multiplied vectors does not matter. For example, the product of a $4\textsuperscript{th}$-order tensor $\F\in\RR^{n_1\times n_2\times n_3\times n_4}$ and two vectors $\u\in\RR^{n_2},\v\in\RR^{n_4}$ at the modes $2$ and $4$ (respectively) is an $n_1\times n_3$ tensor $\G= \F\bigotimes_2\u\bigotimes_4\v =  \F\bigotimes_4\v\bigotimes_2\u$, where 
\begin{equation}
G_{i_1i_3} = \sum_{i_2=1}^{n_2}\sum_{i_4=1}^{n_4}F_{i_1i_2i_3i_4}u_{i_2}v_{i_4}\quad\forall i_1,i_3.
\end{equation}
Let us consider for convenience the notation $\F\bigotimes_{\cI} \cM$ to denote the product of $\F$ with the set of vectors $\cM$, at the modes specified by the set of indices $\cI$ with $\abs{\cI} = \abs{\cM}$. Since the order of the vectors and the modes must agree, $\cM$ and $\cI$ are supposed to be ordered sets. By convention, $\F\bigotimes_{\cI} \cM=\F$ if $\cM = \emptyset$. Using this notation, the product in the previous example becomes 
\begin{equation}
\G = \F\bigotimes_{\set{2,4}}\set{\u,\v} = \F\bigotimes_{\set{4,2}}\set{\v,\u}.
\end{equation}

Now back to our problem~\eqref{prob:mrf}.
For any node $i$, let $\x_i = (x_i(s))_{s\in\cS_i}$ be the vector composed of all possible values of $x_i(s)$. For a clique $C=(i_1,i_2,\ldots,i_\alpha)$, the potential function $f_C(s_1,s_2,\ldots,s_\alpha)$, where $s_d\in\cS_{i_d}\forall 1\le d\le\alpha$, has $\alpha$ indices and thus can be seen as an $\alpha\textsuperscript{th}$-order tensor of dimensions $\abs{\cS_{i_1}}\times\abs{\cS_{i_2}}\times\cdots\times\abs{\cS_{i_\alpha}}$. Let $\F_C$ denote this tensor. Recall that the energy term corresponding to $C$ in~\eqref{prob:mrf} is
\begin{equation}
\sum_{s_1,s_2,\ldots,s_\alpha} f_C(s_1,s_2,\ldots,s_\alpha) x_{i_1}(s_1)x_{i_2}(s_2)\cdots x_{i_\alpha}(s_\alpha),
\end{equation}
which is clearly $\F_{C} \bigotimes_{\set{1,2,\ldots,\alpha}}\set{\x_{i_1},\x_{i_2},\ldots,\x_{i_\alpha}}$. For clarity purpose, we omit the index set and write simply $\F_{C} \bigotimes\set{\x_{i_1},\x_{i_2},\ldots,\x_{i_\alpha}}$, or equivalently $\F_{C} \bigotimes\set{\x_i}_{i\in C}$, with the assumption that each vector is multiplied at the right mode (which is the same as its position in the clique). Therefore, the energy in~\eqref{prob:mrf} becomes
\begin{equation}\label{eq:mrf-Energy}
E(\x) = \sum_{C\in\cC} \F_{C} \bigotimes\set{\x_i}_{i\in C}.
\end{equation}
Problem~\eqref{prob:mrf} can then be rewritten as
\begin{align}
\mbox{min } &\quad E(\x) \tag{\textsc{\textbf{mrf}}}\label{prob:mrf-tensor}\\ 
\mbox{s.t.} &\quad \x\in\overline{\cX} := \set{\x \ \Big| \ \mathbf{1}^\top\x_i = 1, \x_i\in\set{0,1}^{\abs{\cS_i}}	\ \forall i\in\cV}.\nonumber
\end{align}
A continuous relaxation of this problem is studied in the next section.

\section{Tight relaxation of MAP inference}\label{sec:relaxation}
By simply relaxing the constraints $\x_i\in\set{0,1}^{\abs{\cS_i}}$ in~\eqref{prob:mrf-tensor} to $\x_i\ge\mathbf{0}$, we obtain the following nonconvex relaxation:
\begin{align}
\mbox{min } &\quad E(\x) \tag{\textsc{\textbf{rlx}}}\label{prob:relax}\\ 
\mbox{s.t.} &\quad \x\in\cX := \set{\x \ \Big| \ \mathbf{1}^\top\x_i = 1, \x_i\ge\mathbf{0}	\ \forall i\in\cV}.\nonumber
\end{align}
A clear advantage of this relaxation over the LP relaxation is its compactness. Indeed, if all nodes have the same number of labels $S$, then the number of variables and number of constraints of this relaxation are respectively $\abs{\cV}S$ and $\abs{\cV}$, while for the LP relaxation these numbers are respectively $\mathcal{O}(\abs{\cC}S^D)$ and $\mathcal{O}(\abs{\cC}SD)$, with $D$ the degree of the MRF.

In this section some interesting properties of~\eqref{prob:relax} are presented. In particular, we prove that this relaxation is tight and show how to obtain a \emph{discrete} stationary point for it. Let us first propose a simple BCD algorithm to solve~\eqref{prob:relax}. Relaxation tightness and other properties follow naturally.

Let $n=\abs{\cV}$ be the number of nodes. The vector $\x$ can be seen as an $n$-block vector, where each block corresponds to each node: $\x=(\x_1,\x_2,\ldots,\x_n)$. Starting from an initial solution, BCD solves~\eqref{prob:relax} by iteratively optimizing $E$ over $\x_i$ while fixing all the other blocks. Note that our subsequent analysis is still valid for other variants of BCD, such as updating in a random order, or using subgraphs such as trees (instead of single nodes) as update blocks. To keep the presentation simple, however, we choose to update in the deterministic order $i=1,2,\ldots,n$. Each update step consists of solving
\begin{equation}\label{eq:BCD-update}
\x_i^{(k+1)} \in \argmin_{\mathbf{1}^\top\x_i = 1,\x_i\ge\mathbf{0}} E(\x_{[1,i-1]}^{(k+1)},\x_i,\x_{[i+1,n]}^{(k)}).
\end{equation}

From~\eqref{eq:mrf-Energy} it is clear that for the cliques that do not contain the node $i$, their corresponding energy terms are independent of $\x_i$. Thus, if $\cC(i)$ denotes the set of cliques containing $i$, then 
\begin{align}
	E(\x) &= \sum_{C\in\cC(i)} \F_C\bigotimes \set{\x_j}_{j\in C} + \mathrm{cst}(\x_i) \\
	&= \c_i^\top\x_i + \mathrm{cst}(\x_i),
\end{align}
where $\mathrm{cst}(\x_i)$ is a term that does not depend on $\x_i$, and
\begin{equation}\label{eq:mrf-energy-partial-derivative}
	\c_i = \sum_{C\in\cC(i)} \F_C\bigotimes \set{\x_j}_{j\in C\setminus i} \quad\forall i\in\cV.
\end{equation}
The update~\eqref{eq:BCD-update} becomes minimizing $\c_i^\top\x_i$, which can be solved using the following straightforward lemma. 
\begin{lemma}\label{lem:bcd}
Let $\c=(c_1,\ldots,c_p)\in\RR^p$, $\alpha=\argmin_{\beta}c_\beta$. The problem $\min_{\mathbf{1}^\top\u = 1,\u\ge\mathbf{0}} \c^\top\u$ has an optimal solution $\u^*=(u_1^*,\ldots,u_p^*)$ defined by $u_\alpha^* = 1$ and $u_\beta^* = 0 \ \forall \beta\neq\alpha$.
\end{lemma}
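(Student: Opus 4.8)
The plan is to establish the claimed optimality by a one-line lower bound on the objective combined with a direct evaluation at the proposed point $\u^*$. First I would record that $\u^*$ is feasible: its entries are nonnegative and $\mathbf{1}^\top\u^* = u_\alpha^* = 1$, so $\u^*$ lies in the simplex $\set{\mathbf{1}^\top\u = 1,\u\ge\mathbf{0}}$.

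Next I would lower-bound $\c^\top\u$ for an arbitrary feasible $\u=(u_1,\ldots,u_p)$. Since $\alpha\in\argmin_\beta c_\beta$ we have $c_\beta\ge c_\alpha$ for every $\beta$, and since $u_\beta\ge 0$ this gives $c_\beta u_\beta\ge c_\alpha u_\beta$ termwise. Summing over $\beta$ and using $\mathbf{1}^\top\u=1$ yields
\begin{equation}
\c^\top\u = \sum_{\beta=1}^{p} c_\beta u_\beta \ge c_\alpha\sum_{\beta=1}^{p} u_\beta = c_\alpha.
\end{equation}
Finally I would evaluate the objective at $\u^*$: because $u_\alpha^*=1$ and $u_\beta^*=0$ for $\beta\neq\alpha$, we get $\c^\top\u^* = c_\alpha$. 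Hence $\u^*$ attains the lower bound over the feasible set and is therefore an optimal solution.

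There is no real obstacle here; the statement is elementary and the argument above is essentially complete. The only subtlety worth a remark is that the minimizer need not be unique when several coordinates of $\c$ attain the minimum value $c_\alpha$ — any convex combination of the corresponding unit vectors is then also optimal — which is exactly why the lemma asserts the existence of \emph{an} optimal solution of the stated form rather than uniqueness.
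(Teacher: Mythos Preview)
Your argument is correct and complete: feasibility, the termwise bound $c_\beta u_\beta\ge c_\alpha u_\beta$, and the evaluation $\c^\top\u^*=c_\alpha$ are exactly what is needed, and your remark on non-uniqueness is apt. The paper itself does not prove this lemma at all --- it simply labels it ``straightforward'' --- so your proposal fills in precisely the standard one-line argument the authors had in mind.
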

According to this lemma, we can solve~\eqref{eq:BCD-update} as follows: compute $\c_i$ using~\eqref{eq:mrf-energy-partial-derivative}, find the position $s$ of its smallest element, set $x_i(s) = 1$ and $x_i(r) = 0\ \forall r\neq s$.  Clearly, the solution $\x_i$ returned by this update step is discrete. It is easily seen that this update is equivalent to assigning the node $i$ with the following label:
\begin{equation}
s_i = \argmin_{s\in\cS_i} \sum_{C\in\cC(i)} \sum_{s_{C\setminus i}\in \cS_{C\setminus i}} f_C(s_{C\setminus i},s)\prod_{j\in C\setminus i}x_j(s_j).
\end{equation}

A sketch of the BCD algorithm is given in Algorithm~\ref{algo:BCD}.
%\begin{algorithm}
%\caption{Block coordinate descent for solving~\eqref{prob:relax}}\label{algo:BCD}
%\begin{algorithmic}[1]
%\Function{BCD}{$E,\x^{(0)}$}\Comment{Input: energy function $E$ and an initial solution $\x^{(0)}$}
%\State $k\gets 0$
%\Repeat\Comment{Outer iteration}
%\For{$i=1,2,\ldots,n$}\Comment{Inner iteration}
%\State Update $\x_i^{(k+1)}$ as a (discrete) solution to~\eqref{eq:BCD-update}.
%\State If $\x_i^{(k)}$ is also a discrete solution to~\eqref{eq:BCD-update}, then set $\x_i^{(k+1)} \gets \x_i^{(k)}$.
%\EndFor
%\State $k\gets k+1$
%\Until{$\x^{(k+1)}=\x^{(k)}$}
%\State \textbf{return} $\x^{(k+1)}$\Comment{The returned solution is discrete and produces no higher energy than $\x^{(0)}$}
%\EndFunction
%\end{algorithmic}
%\end{algorithm}
\begin{algorithm}
\caption{Block coordinate descent for solving~\eqref{prob:relax}.}\label{algo:BCD}
\begin{algorithmic}[1]
\State Initialization: $k\gets 0$, $\x^{(0)}\in\cX$.
\State For $i=1,2,\ldots,n$: update $\x_i^{(k+1)}$ as a (discrete) solution to~\eqref{eq:BCD-update}. If $\x_i^{(k)}$ is also a discrete solution to~\eqref{eq:BCD-update}, then set $\x_i^{(k+1)} \gets \x_i^{(k)}$.
\State Let $k\gets k+1$ and go to Step 2 until $\x^{(k+1)} = \x^{(k)}$.
\end{algorithmic}
\end{algorithm} %\vspace*{-0.3cm}

\noindent\textit{Remark.} Starting from a discrete solution, BCD is equivalent to Iterated Conditional Modes (ICM)~\cite{besag1986statistical}. Note however that BCD is designed for the \emph{continuous} problem~\eqref{prob:relax}, whereas ICM relies on the \emph{discrete} problem~\eqref{prob:mrf-tensor}.
\begin{proposition}\label{propos:bcd-fixed-point}
For any initial solution $\x^{(0)}$, BCD (Algorithm~\ref{algo:BCD}) converges to a discrete fixed point.
\end{proposition}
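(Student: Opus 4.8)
The plan is to show that the energy $E$ is monotonically non-increasing along the BCD iterations, that the iterates become discrete after a single sweep, and that these two facts together with the tie-breaking rule in the second step of Algorithm~\ref{algo:BCD} force the algorithm to reach a fixed point after finitely many iterations.

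First I would establish that no block update can increase $E$. By the reformulation $E(\x) = \c_i^\top\x_i + \mathrm{cst}(\x_i)$ that precedes Lemma~\ref{lem:bcd}, the subproblem~\eqref{eq:BCD-update} is the minimization of an affine function over the probability simplex $\set{\x_i \mid \mathbf{1}^\top\x_i = 1,\x_i\ge\mathbf{0}}$; by Lemma~\ref{lem:bcd} it admits a \emph{discrete} minimizer, which is exactly what the update returns. Hence the value of $E$ after updating block $i$ is at most its value before, and consequently $E(\x^{(k+1)})\le E(\x^{(k)})$ for every $k$. Moreover, after the first sweep (i.e.\ for $k\ge 1$) every block $\x_i^{(k)}$ is a vertex of its simplex, so $\x^{(k)}\in\overline{\cX}$, which is a finite set since each $\cS_i$ is finite. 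The sequence $\bigl(E(\x^{(k)})\bigr)_{k\ge 1}$ is therefore non-increasing and takes its values in the finite set $\set{E(\x)\mid\x\in\overline{\cX}}$, so it is eventually constant: there exists $K$ with $E(\x^{(k)}) = E^*$ for all $k\ge K$.

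It then remains to deduce that the iterate itself, not merely its energy, eventually stops changing. Fix $k\ge K$ and look at the $n$ intra-sweep updates that produce $\x^{(k+1)}$ from $\x^{(k)}$. Since each individual update is non-increasing and the net change over the whole sweep is $E^*-E^*=0$, every intra-sweep update must leave $E$ unchanged. I would then argue by induction on $i=1,\ldots,n$ that $\x_i^{(k+1)} = \x_i^{(k)}$: assuming the first $i-1$ blocks were left unchanged, the vector entering the $i$-th update equals $\x^{(k)}$, so $\x_i^{(k)}$ already attains the minimum of the $i$-th subproblem (that update does not change the energy) and is discrete (as $k\ge1$); the tie-breaking rule then keeps $\x_i^{(k+1)}\gets\x_i^{(k)}$. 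Hence $\x^{(k+1)}=\x^{(k)}$, the stopping condition of Algorithm~\ref{algo:BCD} is met, and BCD has reached a discrete fixed point after finitely many iterations.

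The routine parts, namely monotonicity of a single block update and finiteness of $\overline{\cX}$, are immediate; the one point that needs care is the last step, where the tie-breaking rule is essential. Without it, BCD could in principle cycle indefinitely among several distinct discrete configurations sharing the minimal sweep energy, and the implication ``energy constant $\Rightarrow$ iterate constant'' would fail. I expect this interplay between the non-increasing energy on a finite set and the deterministic tie-breaking to be the crux of the argument.
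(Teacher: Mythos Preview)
Your proposal is correct and follows essentially the same approach as the paper's proof: monotone decrease of the energy, discreteness of the iterates after one sweep so that they lie in the finite set $\overline{\cX}$, and the tie-breaking rule to conclude that once the energy stabilizes the iterate itself is fixed. Your version is in fact more carefully argued than the paper's---in particular, your explicit induction over the blocks within the stalling sweep and your remark that the tie-breaking rule is indispensable make precise what the paper states only informally.
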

A proof is given in the supplement. We will see in Section~\ref{sec:convergence-analysis} that this fixed point is also a stationary point of~\eqref{prob:relax}.

\begin{theorem}
The continuous relaxation~\eqref{prob:relax} is tight.
\end{theorem}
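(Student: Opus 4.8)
The plan is to interpret ``tight'' as the statement that the optimal values of \eqref{prob:relax} and \eqref{prob:mrf-tensor} coincide, and to prove this by showing that any continuous feasible point can be converted into a discrete feasible point without increasing the energy. Since $\overline{\cX}\subseteq\cX$, the inequality $\min_{\x\in\cX}E(\x)\le\min_{\x\in\overline{\cX}}E(\x)$ is immediate, so only the reverse direction requires work.

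First I would observe that $E$ is a polynomial, hence continuous, and $\cX$ is compact, so the minimum in \eqref{prob:relax} is attained at some $\x^\star\in\cX$. Then I would run BCD (Algorithm~\ref{algo:BCD}) starting from $\x^{(0)}=\x^\star$ and follow a single sweep $i=1,\dots,n$. For each $i$, the update \eqref{eq:BCD-update} minimizes $\c_i^\top\x_i$ over the probability simplex exactly, so the energy is non-increasing along the sweep; moreover, by Lemma~\ref{lem:bcd} this minimization admits a vertex (hence $\{0,1\}$-valued) optimizer, which is the one BCD selects. Because each block update modifies only the block $\x_i$, once a block has been made discrete it remains discrete throughout the rest of the sweep. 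Hence the point $\x'$ obtained after the full sweep lies in $\overline{\cX}$ and satisfies $E(\x')\le E(\x^\star)$.

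Combining the two directions yields $\min_{\x\in\overline{\cX}}E(\x)\le E(\x')\le E(\x^\star)=\min_{\x\in\cX}E(\x)\le\min_{\x\in\overline{\cX}}E(\x)$, so all these quantities are equal and the relaxation is tight. Alternatively, one may simply invoke Proposition~\ref{propos:bcd-fixed-point}: BCD started from $\x^\star$ converges to a discrete point whose energy is at most $E(\x^\star)$, giving the same conclusion.

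I do not expect a genuine obstacle here; the content of the result is already packaged in Lemma~\ref{lem:bcd} and in the reformulation leading to \eqref{eq:mrf-energy-partial-derivative}. The one point to state carefully is that the coefficient vector $\c_i$ used at step $i$ of the sweep is evaluated at the partially updated iterate (the other blocks may have already changed), but this is precisely what makes each step an exact block minimization, so monotonicity of $E$ along the sweep holds regardless.
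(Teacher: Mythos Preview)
Your proposal is correct and follows essentially the same approach as the paper: start BCD from a continuous minimizer of \eqref{prob:relax}, use that BCD never increases the energy and outputs a discrete point, and sandwich the optimal values. The paper's main proof phrases this via Proposition~\ref{propos:bcd-fixed-point} (full BCD convergence), then remarks---as you do explicitly---that a single sweep already suffices; your observation about the coefficient $\c_i$ being evaluated at the partially updated iterate is exactly the point that makes the one-sweep argument go through.
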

\begin{proof}
Since $E(\x)$ is continuous and both $\overline{\cX}$ and $\cX$ are closed, according to the Weierstrass extreme value theorem, both~\eqref{prob:mrf-tensor} and~\eqref{prob:relax} must attain a (global) minimum, which we denote by $\xmrf$ and $\xrelax$, respectively. Obviously $E(\xrelax) \le E(\xmrf)$. Now let $\x^*$ be the solution of BCD with initialization $\x^{(0)} = \xrelax$. On the one hand, since $\mathrm{BCD}$ is a descent algorithm, we have $E(\x^*) \le E(\xrelax)$. On the other hand, since the solution returned by $\mathrm{BCD}$ is discrete, we have $\x^*\in\overline{\cX}$, yielding $E(\xmrf) \le E(\x^*)$. Putting it all together, we get $E(\x^*) \le E(\xrelax) \le E(\xmrf) \le E(\x^*)$, which implies $E(\xrelax) = E(\xmrf)$, \ie~\eqref{prob:relax} is tight.
\end{proof}
\noindent\textit{Remark.} The above proof is still valid if BCD performs only the first outer iteration. This means that one can obtain $\xmrf$ from $\xrelax$ (same energy) in polynomial time, \ie~\eqref{prob:relax} and~\eqref{prob:mrf-tensor} can be seen as equivalent. This result was previously presented in~\cite{ravikumar2006quadratic} for pairwise MRFs, here we have extended it to arbitrary order MRFs.

While BCD is guaranteed to reach a discrete stationary point of~\eqref{prob:relax}, there is no guarantee on the quality of such point. In practice, as shown later in the experiments, the performance of BCD compares poorly with state of the art MRF optimization methods. In fact, the key challenge in nonconvex optimization is that there might be many local minima, and as a consequence, algorithms can easily get trapped in \emph{bad} ones, even from multiple initializations.

 % Straightforward application of popular first-order methods such as projected gradient descent or conditional gradient (also known as Frank-Wolfe algorithm) also suffer from this difficulty in our experiments.

%In the next section, we study the resolution of~\eqref{prob:relax} using popular gradient-based algorithms, and propose a novel method based on the alternating direction method of multipliers~\cite{boyd2011distributed}. The subsequent experiments show that the proposed method can reach very good local minima (many times even the global ones) on different real-world models.
In the next section, we study the resolution of~\eqref{prob:relax} using more sophisticated methods, where we come up with a multilinear decomposition ADMM that can reach very good local minima (many times even the global ones) on different real-world models.

\section{Solving the tight nonconvex relaxation}\label{sec:solving}
Since the MRF energy~\eqref{eq:mrf-Energy} is differentiable, it is worth investigating whether gradient methods can effectively optimize it. We briefly present two such methods in the next section. Then our proposed ADMM based algorithm is presented in the subsequent section. We provide a convergence analysis for all methods in Section~\ref{sec:convergence-analysis}.

\subsection{Gradient methods}\label{sec:gradient-methods}
Projected gradient descent (PGD) and Frank-Wolfe algorithm (FW) (Algorithms~\ref{algo:PGD}, \ref{algo:FW}) are among the most popular methods for solving constrained optimization. We refer to~\cite{bertsekas1999nonlinear} for an excellent presentation of these methods.

%\vspace*{-0.15cm}
\begin{algorithm}
\caption{Projected gradient descent for solving~\eqref{prob:relax}. % The step-sizes $\beta^{(k)}$ and $\alpha^{(k)}$ follow a chosen update rule.
}\label{algo:PGD}
\begin{algorithmic}[1]
\State Initialization: $k\gets 0$, $\x^{(0)}\in\cX$.
\State Compute $\beta^{(k)}$ and find the projection
\begin{equation}
\s^{(k)}= \argmin_{\s\in\cX} \norm{\x^{(k)} - \beta^{(k)}\nabla E(\x^{(k)}) - \s}^2.
\end{equation}
\State Compute $\alpha^{(k)}$ and update $\x^{(k+1)} = \x^{(k)} + \alpha^{(k)}(\s^{(k)} - \x^{(k)})$. Let $k\gets k+1$ and go to Step 2.
\end{algorithmic}
\end{algorithm} %\vspace*{-0.5cm}

\begin{algorithm}
\caption{Frank-Wolfe algorithm for solving~\eqref{prob:relax}.
%The step-sizes $\protect\alpha^{(k)}$ follow a chosen update rule.
}\label{algo:FW}
\begin{algorithmic}[1]
\State Initialization: $k\gets 0$, $\x^{(0)}\in\cX$.
\State Find $\s^{(k)}= \argmin_{\s\in\cX} \s^\top\nabla E(\x^{(k)})$.
\State Compute $\alpha^{(k)}$ and update $\x^{(k+1)} = \x^{(k)} + \alpha^{(k)}(\s^{(k)} - \x^{(k)})$. Let $k\gets k+1$ and go to Step 2.
\end{algorithmic}
\end{algorithm}

The step-sizes $\beta^{(k)}$ and $\alpha^{(k)}$ follow a chosen update rule. The most straightforward is the \emph{diminishing} rule, which has for example $\beta^{(k)} = \frac{1}{\sqrt{k+1}},\alpha^{(k)} = 1$ for PGD, and $\alpha^{(k)} = \frac{2}{k+2}$ for FW. However, in practice, these step-sizes often lead to slow convergence. A better alternative is the following \emph{line-search} ($\beta^{(k)}$ is set to $1$ for PGD):
\begin{equation}\label{eq:line-search}
\alpha^{(k)} = \argmin_{0\le \alpha\le 1}E\left(\x^{(k)} + \alpha(\s^{(k)} - \x^{(k)})\right).
\end{equation}
For our problem, this line-search can be performed efficiently because $E\left(\x^{(k)} + \alpha(\s^{(k)} - \x^{(k)})\right)$ is a polynomial of $\alpha$. Further details (including line-search, update steps, stopping conditions, as well as other implementation issues) are provided in the supplement.

\subsection{Alternating direction method of multipliers}\label{sec:admm}
Our proposed method shares some similarities with the method introduced in~\cite{lehuu2017adgm} for solving graph matching. However, to make ADMM efficient and effective for MAP inference, we add the following important practical contributions: (1) We formulate the problem using individual potential tensors at each clique (instead of a single large tensor as in~\cite{lehuu2017adgm}), which allows a better exploitation of the problem structure, as computational quantities at each node can be cached based on its neighboring nodes, yielding significant speed-ups; (2) We discuss how to choose the decomposed constraint sets that result in the best accuracy for MAP inference (note that the constraint sets for graph matching~\cite{lehuu2017adgm} are different). In addition, we present a convergence analysis for the proposed method in Section~\ref{sec:convergence-analysis}.

For the reader to quickly get the idea, let us start with an example of a second-order\footnote{Note that pairwise MRFs are also called \emph{first-order} ones.} MRF:
\begin{multline}
	E_\text{second}(\x) = \sum_{i\in \cV} \F_i \bigotimes \x_i + \sum_{ij\in\cC} \F_{ij} \bigotimes \set{\x_i,\x_j} \\ + \sum_{ijk\in\cC} \F_{ijk} \bigotimes \set{\x_i,\x_j,\x_k}.
\end{multline}
Instead of dealing directly with this high degree polynomial, which is highly challenging, the idea is to decompose $\x$ into different variables that can be handled separately using Lagrangian relaxation. To this end, consider the following multilinear function:
\begin{multline}
	F_\text{second}(\x,\y,\z) = \sum_{i\in \cV} \F_i \bigotimes \x_i + \sum_{ij\in\cC} \F_{ij} \bigotimes \set{\x_i,\y_j} \\ + \sum_{ijk\in\cC} \F_{ijk} \bigotimes \set{\x_i,\y_j,\z_k}.
\end{multline}
Clearly, $E_\text{second}(\x) = F_\text{second}(\x,\x,\x)$. Thus, minimizing $E(\x)$ is equivalent to minimizing $F_\text{second}(\x,\y,\z)$ under the constraints $\x=\y=\z$, which can be relaxed using Lagrangian based method such as ADMM.

Back to our general problem~\eqref{prob:relax}. Let $D$ denote the maximum clique size of the corresponding MRF. Using the same idea as above for decomposing $\x$ into $D$ vectors $\x^1,\x^2,\ldots,\x^D$, let us define
\begin{equation}\label{eq:energy-decomposed}
F(\x^1,\ldots,\x^D) = \sum_{d=1}^D \sum_{i_1\ldots i_d\in\cC} \F_{i_1\ldots i_d} \bigotimes \set{\x^1_{i_1},\ldots,\x^d_{i_d}}.
\end{equation}
Clearly, the energy~\eqref{eq:mrf-Energy} becomes $E(\x) = F(\x,\x,\ldots,\x)$. It is straightforward to see that~\eqref{prob:relax} is equivalent to:
\begin{equation}\label{prob:mrf-decomposed}
\begin{aligned} 
\mbox{min}\quad & F(\x^1,\x^2,\ldots,\x^D)\\ 
\mbox{s.t.}\quad & \A^1\x^1 + \cdots +\A^D\x^D = \mathbf{0},\\
&\x^d \in\cX^d,\quad d=1,\ldots,D,
\end{aligned}
\end{equation}
where $\A^1,\ldots,\A^D$ are constant matrices such that
\begin{equation}\label{eq:equality-constraint}
\A^1\x^1 + \cdots +\A^D\x^D = \mathbf{0} \Longleftrightarrow \x^1=\cdots=\x^D,
\end{equation} and $\cX^1,\ldots,\cX^D$ are closed convex sets satisfying
\begin{equation}\label{eq:set-intersection}
\cX^1\cap\cX^2\cap\cdots\cap\cX^D = \cX.
\end{equation}
Note that the linear constraint in~\eqref{prob:mrf-decomposed} is a general way to enforce $\x^1=\cdots=\x^D$ and it has an infinite number of particular instances. For example, with suitable choices of $(\A^d)_{1\le d\le D}$, this linear constraint can become either one of the following sets of constraints:
\begin{alignat}{3}
&\text{(cyclic)}&&\x^{d-1}	&&=\x^d, \quad d = 2,\ldots, D,\label{eq:cyclic}\\
&\text{(star)}&&\x^1 		&&=\x^d,\quad d = 2,\ldots, D,\label{eq:star}\\
&\text{(symmetric)}\quad&&\x^d 		&&=(\x^1+\cdots+\x^D)/D\quad \forall d.\label{eq:symmetric}
\end{alignat}
We call such an instance a \emph{decomposition}, and each decomposition will lead to a different algorithm.

The augmented Lagrangian of~\eqref{prob:mrf-decomposed} is defined by: 
\begin{multline}\label{eq:lagrangian}
	L_\rho(\x^1,\ldots,\x^D,\y) = F(\x^1,\ldots,\x^D) \\ + \y^\top\left(\sum_{d=1}^D \A^d\x^d\right) 
	+ \frac{\rho}{2}\norm{\sum_{d=1}^D \A^d\x^d}^2,
\end{multline} 
where $\y$ is the \emph{Lagrangian multiplier vector} and $\rho>0$ is called the \emph{penalty parameter}.

Standard ADMM~\cite{boyd2011distributed} solves~\eqref{prob:mrf-decomposed} by iterating:
\begin{enumerate}
	\item For $d=1,2,\ldots,D$: update $\x^{d^{(k+1)}}$ as a solution of
	\begin{equation}\label{eq:update-x}
		\min_{\x^d\in\cX^d} L_\rho(\x^{[1,d-1]^{(k+1)}},\x^d,\x^{[d+1,D]^{(k)}},\y^{(k)}).
	\end{equation}
	\item Update $\y$:
	\begin{equation}\label{eq:update-y}
		\y^{(k+1)} = \y^{(k)} + \rho\left(\sum_{d=1}^D\A^d\x^{d^{(k+1)}}\right).
	\end{equation}
\end{enumerate}
The algorithm converges if the following \emph{residual} converges to $0$ as $k\to+\infty$:
\begin{equation}\label{eq:residual}
r^{(k)} = \norm{\sum_{d=1}^D\A^d\x^{d^{(k)}}}^2 + \sum_{d=1}^D\norm{\x^{d^{(k)}} - \x^{d^{(k-1)}}}^2.
\end{equation}

We show how to solve the $\x$ update step~\eqref{eq:update-x} (the $\y$ update~\eqref{eq:update-y} is trivial). Updating $\x^d$ consists of minimizing the augmented Lagrangian~\eqref{eq:lagrangian} with respect to the $d\textsuperscript{th}$ block while fixing the other blocks.

Since $F(\x^1,\ldots,\x^D)$ is linear with respect to each block $\x^d$ (\cf~\eqref{eq:energy-decomposed}), it must have the form
\begin{equation}\label{eq:energy-factor-linear}
F(\x^{[1,d-1]},\x^d,\x^{[d+1,D]}) = \inner{\p^d,\x^d} + \mathrm{cst(\x^d)},
\end{equation}
where $\mathrm{cst(\x^d)}$ is a term that does not depend on $\x^d$. Indeed, it can be shown (detailed in the supplement) that $\p^d=(\p_1^d,\ldots,\p_n^d)$ where
\begin{multline}\label{eq:p}
\p_i^d = \sum_{\alpha=d}^D \left( \sum_{i_1\ldots i_{d-1}ii_{d+1}\ldots i_\alpha \in\cC} \F_{i_1i_2\ldots i_\alpha} \bigotimes \right. \\ \left.\set{\x^1_{i_1},\ldots,\x^{d-1}_{i_{d-1}}, \x^{d+1}_{i_{d+1}},\ldots,\x^\alpha_{i_\alpha}} \vphantom{\sum_{C=i_1\ldots i_{d-1}ii_{d+1}\ldots i_\alpha \in\cC}}\right) \forall i\in\cV.
\end{multline}
While the expression of $\p_i^d$ looks complicated, its intuition is simple: for a given node $i$ and a degree $d$, we search for all cliques satisfying two conditions: (a) their sizes are bigger than or equal to $d$, and (b) the node $i$ is at the $d\textsuperscript{th}$ position of these cliques; then for each clique, we multiply its potential tensor with all its nodes except node $i$, and sum all these products together.

Denote
\begin{equation}\label{eq:sd}
\s^d = \sum_{c=1}^{d-1}\A^c\x^c + \sum_{c=d+1}^{D}\A^c\x^c.
\end{equation}
Plugging~\eqref{eq:energy-factor-linear} and~\eqref{eq:sd} into~\eqref{eq:lagrangian} we get:
\begin{multline}\label{eq:lagrangian-quadratic}
L_\rho(\x^1,\ldots,\x^D,\y) = \frac{\rho}{2}\norm{\A^d\x^d}^2 \\ + \left(\p^d + \A^{d\top}\y + \rho\A^{d\top}\s^d\right)^\top\x^d + \mathrm{cst(\x^d)}.
\end{multline}
Therefore, the $\x$ update~\eqref{eq:update-x} becomes minimizing the quadratic function~\eqref{eq:lagrangian-quadratic} (with respect to $\x^d$) over $\cX^d$. With suitable decompositions, this problem can have a much simpler form and can be efficiently solved. For example, if we choose the \emph{cyclic} decomposition~\eqref{eq:cyclic}, then this step is reduced to finding the projection of a vector onto $\cX^d$:
\begin{equation}\label{eq:projection}
\x^{d^{(k+1)}} = \argmin_{\x^d\in\cX^d}\norm{\x^d - \c^{d^{(k)}} }^2,
\end{equation}
where $(\c_d)_{1\le d \le D}$ are defined as follows (\cf supplement):
\begin{align}
\c^{1^{(k)}} &= \x^{2^{(k)}} - \frac{1}{\rho}\left(\y^{2^{(k)}} + \p^{1^{(k)}}\right),\label{eq:cyclic-c1}\\
\c^{d^{(k)}} &= \frac{1}{2}\left(\x^{d-1^{(k+1)}} + \x^{d+1^{(k)}}\right)\label{eq:cyclic-cd}\\ &+ \frac{1}{2\rho}\left(\y^{d^{(k)}} - \y^{d+1^{(k)}} - \p^{d^{(k)}}\right),\quad 2\le d\le D-1, \notag\\
\c^{D^{(k)}} &= \x^{D-1^{(k+1)}} + \frac{1}{\rho}\left(\y^{D^{(k)}} + \p^{D^{(k)}}\right). \label{eq:cyclic-cD}
\end{align}
Here the multiplier $\y$ is the concatenation of $(D-1)$ vectors $(\y^d)_{2\le d\le D}$, corresponding to $(D-1)$ constraints in~\eqref{eq:cyclic}. 

Similar results can be obtained for other specific decompositions such as \emph{star}~\eqref{eq:star} and \emph{symmetric}~\eqref{eq:symmetric} as well. We refer to the supplement for more details. As we observed very similar performance among these decompositions, only \emph{cyclic} was included for evaluation (Section~\ref{sec:experiments}).

The ADMM procedure are sketched in Algorithm~\ref{algo:ADMM-general}. 

\begin{algorithm}
\caption{ADMM with general decomposition~\eqref{prob:mrf-decomposed} for solving~\eqref{prob:relax}.}\label{algo:ADMM-general}
\begin{algorithmic}[1]
\State Initialization: $k\gets 0$, $\y^{(0)}\gets \mathbf{0}$ and $\x^{d^{(0)}}\in\cX^d$ for $d=1,\ldots,D$.
\State For $d=1,2,\ldots,D$: update $\x^{d^{(k+1)}}$ by solving~\eqref{eq:update-x} (which is reduced to optimizing~\eqref{eq:lagrangian-quadratic} over $\cX^d$).
\State Update $\y^{(k+1)}$ using~\eqref{eq:update-y}. Let $k\gets k+1$ and go to Step 2.
\end{algorithmic}
\end{algorithm} %\vspace*{-0.5cm}
%\begin{algorithm}
%\caption{ADMM for solving~\eqref{prob:relax} with cyclic decomposition~\eqref{eq:cyclic}. The set $\cX^d$ can be chosen to be $\cX$.}\label{algo:ADMM-cyclic}
%\begin{algorithmic}[1]
%\State Initialization: $k\gets 0$, $\y^{d^{(0)}}\gets \mathbf{0}$ for $d=2,\ldots,D$ and $\x^{d^{(0)}}\in\cX^d$ for $d=1,\ldots,D$.
%\State For $d=1,2,\ldots,D$: update $\x^{d^{(k+1)}}$ by solving~\eqref{eq:update-x} (which is reduced to solving~\eqref{eq:projection}).
%\State For $d=2,\ldots,D$: update $\y^{d^{(k+1)}}$ using
%\begin{equation}
%\y^{d^{(k+1)}} = \y^{d^{(k)}} + \rho\left(\x^{d-1^{(k+1)}} - \x^{d^{(k+1)}}\right).
%\end{equation}
%Let $k\gets k+1$ and go to Step 2.
%\end{algorithmic}
%\end{algorithm}

In practice, we found that the penalty parameter $\rho$ and the constraint sets $(\cX^d)_{1\le d\le D}$ can greatly affect the convergence as well as the solution quality of ADMM. Let us address these together with other practical considerations.

%\vspace*{-1ex}
\paragraph{Adaptive penalty}
 We observed that small $\rho$ leads to slower convergence but often better energy, and inversely for large $\rho$. To obtain a good trade-off, we follow~\cite{lehuu2017adgm} and use the following adaptive scheme: initialize $\rho_0$ at a small value and run for $I_1$ iterations (for stabilization), after that if no improvement of the residual $r^{(k)}$ is achieved every $I_2$ iterations, then we increase $\rho$ by a factor $\beta$. In addition, we stop increasing $\rho$ after it reaches some value $\rho_\mathrm{max}$, so that the convergence properties presented in the next section still apply. In the experiments, we normalize all the potentials to $[-1,1]$ and set $I_1 = 500, I_2 = 500, \beta = 1.2,\rho_0 = 0.001, \rho_\mathrm{max}=100$.

%\vspace*{-1ex}
\paragraph{Constraint sets} A trivial choice of $(\cX^d)_{1\le d\le D}$ that satisfies~\eqref{eq:set-intersection} is $\cX^d = \cX\ \forall d$. Then,~\eqref{eq:projection} becomes projections onto the simplex $\set{\x_i \ | \ \mathbf{1}^\top\x_i = 1, \x_i\ge\mathbf{0}}$ for each node $i$, which can be solved using \eg the method introduced in~\cite{condat2016fast}. However, we found that this choice often produces poor quality solutions, despite converging quickly. The reason is that constraining all $\x_i^d$ to belong to a simplex will make them reach consensus faster, but without being allowed to vary more freely, they tend to bypass good solutions. The idea is to use looser constraint sets, \eg $\cX^+ := \set{\x \ | \ \x\ge \mathbf{0}}$, for which~\eqref{eq:projection} becomes simply $\x^{d^{(k+1)}} = \max(\c^{d^{(k)}},0)$. We found that leaving only one set as $\cX$ yields the best accuracy. Therefore, in our implementation we set $\cX^1 = \cX$ and $\cX^d = \cX^+\ \forall d \ge 2$.

%\vspace*{-1ex}
\paragraph{Parallelization}
Since there is no dependency among the nodes in the constraint sets, the projection~\eqref{eq:projection} is clearly reduced to \emph{independent} projections at each node. Moreover, at each iteration, the expensive computation~\eqref{eq:p} of $\p_i^{d}$ can also be performed in parallel for all nodes. Therefore, the proposed ADMM is highly parallelizable. 

%\vspace*{-1ex}
\paragraph{Caching}
Significant speed-ups can be achieved by avoiding re-computation of unchanged quantities. From~\eqref{eq:p} it is seen that $\p_i^d$ only depends on the decomposed variables at the neighbors of $i$. Thus, if these variables have not changed from the last iteration, then there is no need to recompute $\p_i^d$ in the current iteration. Similarly, the projection~\eqref{eq:projection} for $\x_i^d$ can be omitted if $\c_i^d$ is unchanged (\cf~\eqref{eq:cyclic-c1}--\eqref{eq:cyclic-cD}).

\section{Convergence analysis}\label{sec:convergence-analysis}
In this section, we establish some convergence results for the presented methods. Due to space constraints, proofs are provided in the supplementary material.
\begin{definition}[Stationary point]\label{def:stationary} Let $f:\RR^d\to\RR$ be a continuously differentiable function over a closed convex set $\cM$. A point $\u^*$ is called a \emph{stationary point} of the problem $\min_{\u\in\cM}f(\u)$ if and only if it satisfies
	\begin{equation}\label{eq:stationary}
	\nabla f(\u^*)^\top (\u - \u^*) \ge 0 \quad\forall \u\in\cM.
	\end{equation}
\end{definition}
Note that~\eqref{eq:stationary} is a necessary condition for a point $\u^*$ to be a local optimum (a proof can be found in~\cite{bertsekas1999nonlinear}, Chapter 2).

\begin{proposition}\label{propos:PGD-FW-stationary}
Let $\{\x^{(k)}\}$ be a sequence generated by BCD, PGD or FW (Algorithms~\ref{algo:BCD},~\ref{algo:PGD} or~\ref{algo:FW}) with line-search~\eqref{eq:line-search}. Then every limit point\footnote{A vector $\x$ is a limit point of a sequence $\{\x^{(k)}\}$ if there exists a subsequence of $\{\x^{(k)}\}$ that converges to $\x$.} of $\{\x^{(k)}\}$ is stationary.
\end{proposition}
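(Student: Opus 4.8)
The plan is to treat the three algorithms uniformly as instances of the feasible direction / gradient-related framework (cf.\ \cite{bertsekas1999nonlinear}, Chapter~2), since each of BCD, PGD and FW produces iterates of the form $\x^{(k+1)} = \x^{(k)} + \alpha^{(k)}(\s^{(k)} - \x^{(k)})$ with $\s^{(k)}\in\cX$, hence $\x^{(k+1)}\in\cX$ by convexity. First I would record the two ingredients that all three share: (i) $E$ is continuously differentiable (indeed a polynomial) on the compact set $\cX$, so $\nabla E$ is Lipschitz on $\cX$ and $E$ is bounded below; (ii) with the exact line-search~\eqref{eq:line-search} the sequence $\{E(\x^{(k)})\}$ is nonincreasing, hence convergent, so $E(\x^{(k)}) - E(\x^{(k+1)})\to 0$. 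I would then argue by contradiction: suppose $\bar\x$ is a limit point of $\{\x^{(k)}\}$ that is not stationary, i.e.\ there is $\u\in\cX$ with $\nabla E(\bar\x)^\top(\u - \bar\x) = -\delta < 0$, and pass to a subsequence $\x^{(k)}\to\bar\x$ along $k\in K$.

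The key step is to show that a non-stationary limit point forces a uniformly bounded-below decrease, contradicting $E(\x^{(k)})-E(\x^{(k+1)})\to0$. For FW this is cleanest: the FW direction $\s^{(k)}$ minimizes the linearization over $\cX$, so $\nabla E(\x^{(k)})^\top(\s^{(k)}-\x^{(k)}) \le \nabla E(\x^{(k)})^\top(\u-\x^{(k)})$, which by continuity of $\nabla E$ is $\le -\delta/2$ for $k\in K$ large. The descent lemma gives $E(\x^{(k)}+\alpha(\s^{(k)}-\x^{(k)})) \le E(\x^{(k)}) - \alpha\,\delta/2 + \tfrac{L}{2}\alpha^2 \mathrm{diam}(\cX)^2$, and since the line-search picks the best $\alpha\in[0,1]$, plugging in $\alpha = \min\{1, \delta/(2L\,\mathrm{diam}(\cX)^2)\}$ yields a decrease bounded away from $0$ on $K$ — contradiction. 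For PGD ($\beta^{(k)}=1$ under line-search), the projection $\s^{(k)} = \Pi_\cX(\x^{(k)}-\nabla E(\x^{(k)}))$ satisfies the standard projection inequality $(\x^{(k)}-\nabla E(\x^{(k)})-\s^{(k)})^\top(\u-\s^{(k)})\le0$ for all $\u\in\cX$; taking $\u=\x^{(k)}$ gives $\nabla E(\x^{(k)})^\top(\s^{(k)}-\x^{(k)}) \le -\|\s^{(k)}-\x^{(k)}\|^2$, and one shows that if $\bar\x$ is non-stationary then $\|\s^{(k)}-\x^{(k)}\|$ stays bounded away from $0$ along $K$ (since $\x\mapsto\Pi_\cX(\x-\nabla E(\x))-\x$ is continuous and vanishes exactly at stationary points), so the same descent-lemma argument applies. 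For BCD I would invoke Proposition~\ref{propos:bcd-fixed-point}: BCD converges to a discrete fixed point $\x^*$, and I need to check that such a fixed point satisfies~\eqref{eq:stationary}; this follows because fixing all blocks but $i$, $\x_i^*$ minimizes the linear function $\c_i^\top\x_i$ over the simplex (Lemma~\ref{lem:bcd} and~\eqref{eq:mrf-energy-partial-derivative}), so $\c_i^\top(\x_i-\x_i^*)\ge0$ for all feasible $\x_i$, and since $\c_i = \nabla_{\x_i}E(\x^*)$ and $\cX$ is a product of simplices, summing over $i$ gives $\nabla E(\x^*)^\top(\x-\x^*)\ge0$ for all $\x\in\cX$.

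I would assemble these into one statement by noting the common structure — line-search monotonicity plus "the search direction certifies non-stationarity linearly" — so the three cases differ only in how one lower-bounds $-\nabla E(\x^{(k)})^\top(\s^{(k)}-\x^{(k)})$ near a non-stationary point. The main obstacle I anticipate is the BCD case: unlike PGD/FW, a single BCD sweep is not a single feasible-direction step along a convex-combination update of the stated form, so rather than shoehorning it into the descent-lemma argument I expect the clean route is the one above — use Proposition~\ref{propos:bcd-fixed-point} to get convergence to a fixed point and then verify the variational inequality at that fixed point directly, exploiting that $\cX$ is a Cartesian product so block-wise optimality implies global stationarity. A minor technical point to handle carefully is that the line-search step-sizes are well defined (the univariate polynomial $\alpha\mapsto E(\x^{(k)}+\alpha(\s^{(k)}-\x^{(k)}))$ attains its min on $[0,1]$, which is immediate by continuity on a compact interval), and that $L$ and $\mathrm{diam}(\cX)$ are finite, which hold since $\cX$ is compact and $E$ is a polynomial.
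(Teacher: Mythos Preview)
Your proposal is correct and aligns with the paper's proof. The paper simply defers to~\cite{bertsekas1999nonlinear} (Sections~2.2.2 and~2.3.2) for PGD and FW, whereas you spell out the standard descent-lemma/line-search contradiction argument explicitly; for BCD, your argument---invoke Proposition~\ref{propos:bcd-fixed-point} to obtain a fixed point $\x^*$, then use block-wise optimality of $\x_i^*$ over each simplex together with the Cartesian-product structure of $\cX$ to sum the block variational inequalities into $\nabla E(\x^*)^\top(\x-\x^*)\ge 0$---is exactly the paper's.
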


Next, we give a convergence result for ADMM.
\begin{definition}[Karush-Kuhn-Tucker (KKT) conditions]\label{def:kkt}
A point $(\x^{*1},\x^{*2},\ldots,\x^{*D},\y^*)$ is said to be a KKT point of Problem~\eqref{prob:mrf-decomposed} if it satisfies the following KKT conditions:
\begin{align}
&\x^{*d} \in\cX^d, \qquad d=1,\ldots,D,\\
&\A^1\x^{*1} + \cdots +\A^D\x^{*D} = \mathbf{0},\label{eq:kkt-linear-constraint}\\
&\x^{*d} \in\argmin_{\x^d\in\cX^d} \set{F(\x^{*[1,d-1]},\x^d,\x^{*[d+1,D]}) + \y^{*\top}\A^d\x^d}.\label{eq:kkt-argmin)}
\end{align}
\end{definition}
Recall that by definition~\eqref{eq:equality-constraint}, condition~\eqref{eq:kkt-linear-constraint} is equivalent to $\x^{*1}=\x^{*2}=\cdots=\x^{*D}$. Therefore, any KKT point of~\eqref{prob:mrf-decomposed} must have the form $(\x^*,\x^*,\ldots,\x^*,\y^*)$ for some vector $\x^*$ and $\y^*$.

\begin{proposition}\label{propos:KKT}
Let $\{(\x^{1^{(k)}},\ldots,\x^{D^{(k)}},\y^{(k)})\}$ be a sequence generated by ADMM (Algorithm~\ref{algo:ADMM-general}). Assume that the residual $r^{(k)}$~\eqref{eq:residual} converges to $0$, then any limit point of this sequence is a KKT point of~\eqref{prob:mrf-decomposed}. 
\end{proposition}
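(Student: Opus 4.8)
The plan is to unpack the hypothesis $r^{(k)}\to 0$ into the two pieces it actually encodes — the coupling residual $\sum_{d}\A^d\x^{d^{(k)}}\to\mathbf 0$ and the successive-iterate gaps $\x^{d^{(k)}}-\x^{d^{(k-1)}}\to\mathbf 0$ for every $d$ — and then, along a subsequence $\{k_j\}$ that realizes a given limit point $(\x^{*1},\dots,\x^{*D},\y^*)$, to pass to the limit in the first-order optimality conditions of the ADMM subproblems~\eqref{eq:update-x}. Two of the three KKT conditions are immediate: each $\cX^d$ is closed and contains $\x^{d^{(k)}}$ for all $k$, so $\x^{*d}\in\cX^d$; and since $\|\sum_d\A^d\x^{d^{(k)}}\|^2\le r^{(k)}\to 0$, taking limits along $\{k_j\}$ yields~\eqref{eq:kkt-linear-constraint}, which by~\eqref{eq:equality-constraint} also forces the limit point to have the consensus form $(\x^*,\dots,\x^*,\y^*)$.

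The substance is condition~\eqref{eq:kkt-argmin)}. By~\eqref{eq:lagrangian-quadratic} the $\x^d$-update minimizes a convex quadratic over the convex set $\cX^d$, so $\x^{d^{(k+1)}}$ is characterized exactly by the variational inequality $\inner{\nabla_{\x^d}L_\rho,\ \x^d-\x^{d^{(k+1)}}}\ge 0$ for all $\x^d\in\cX^d$, with the gradient evaluated at the partially updated iterate $(\x^{[1,d-1]^{(k+1)}},\x^{d^{(k+1)}},\x^{[d+1,D]^{(k)}},\y^{(k)})$. The key step is to use the multiplier update~\eqref{eq:update-y} to rewrite the ``augmented'' part $\A^{d\top}\y^{(k)}+\rho\A^{d\top}\bigl(\s^{d}+\A^d\x^{d^{(k+1)}}\bigr)$ of this gradient (\cf~\eqref{eq:sd}) as $\A^{d\top}\y^{(k+1)}-\rho\A^{d\top}\sum_{c>d}\A^c(\x^{c^{(k+1)}}-\x^{c^{(k)}})$, i.e. a term in the \emph{next} multiplier plus a correction that is a finite linear combination of successive-iterate gaps; the remaining part of the gradient is $\p^{d,(k+1)}$, which by~\eqref{eq:energy-factor-linear} equals $\nabla_{\x^d}F$ at the currently fixed blocks.

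Now pass to the limit along $\{k_j\}$: the vanishing gaps give $\x^{d^{(k_j+1)}}\to\x^{*d}$ for every $d$; from $\y^{(k+1)}-\y^{(k)}=\rho\sum_d\A^d\x^{d^{(k+1)}}\to\mathbf 0$ we get $\y^{(k_j+1)}\to\y^*$; the correction term vanishes; and $\p^{d,(k_j+1)}\to\nabla_{\x^d}F(\x^{*[1,d-1]},\cdot,\x^{*[d+1,D]})$ by continuity (multilinearity) of~\eqref{eq:p}. Hence the variational inequality survives as $\inner{\nabla_{\x^d}F(\x^{*[1,d-1]},\x^d,\x^{*[d+1,D]})+\A^{d\top}\y^*,\ \x^d-\x^{*d}}\ge 0$ for all $\x^d\in\cX^d$; since the objective in~\eqref{eq:kkt-argmin)} is affine, hence convex, in $\x^d$ over $\cX^d$, this is equivalent to $\x^{*d}$ being its minimizer, which is exactly~\eqref{eq:kkt-argmin)}. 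One should record that the adaptive scheme increases $\rho$ only finitely often and caps it at $\rho_\mathrm{max}$, so $\rho$ is eventually a fixed constant and all the limits above are taken with that constant.

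The main obstacle is the bookkeeping in the second step: one must write the optimality condition at iteration $k+1$ (not $k$), keep track of which blocks carry the superscript $(k+1)$ and which carry $(k)$ inside $\p^{d,(k+1)}$ and $\s^{d}$, and verify that after substituting~\eqref{eq:update-y} every discrepancy between the subproblem gradient and $\nabla_{\x^d}F+\A^{d\top}\y^{(k+1)}$ collapses into successive-iterate gaps, each annihilated by $r^{(k)}\to 0$. Everything else — closedness of the $\cX^d$, continuity of $F$ and of~\eqref{eq:p}, and the equivalence ``variational inequality $\Leftrightarrow$ minimizer'' for a convex subproblem — is routine.
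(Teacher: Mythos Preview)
Your proof is correct and follows essentially the same route as the paper: extract feasibility and consensus from $r^{(k)}\to 0$, then pass to the limit along a convergent subsequence in the variational inequality characterizing the ADMM subproblem~\eqref{eq:update-x}, using that the objective in~\eqref{eq:kkt-argmin)} is affine in $\x^d$. The only difference is cosmetic: the paper does not substitute the multiplier update~\eqref{eq:update-y} into the gradient at all --- it simply takes the limit of $\nabla_{\x^d}L_\rho$ directly by continuity and then observes that, at the consensus limit point, the penalty contribution $\rho\A^{d\top}\bigl(\sum_c\A^c\x^*\bigr)$ is identically zero, which sidesteps your bookkeeping with $\y^{(k+1)}$ and the successive-gap correction entirely.
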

We should note that this result is only partial, since we need the assumption that $r^{(k)}$ converges to $0$. In practice, we found that this assumption always holds if $\rho$ is large enough. Unlike gradient methods, convergence of ADMM for the kind of Problem~\eqref{prob:mrf-decomposed} (which is at the same time multi-block, non-separable and highly nonconvex) is less known and is a current active research topic. For example, global convergence of ADMM for nonconvex nonsmooth functions is established in~\cite{wang2015global}, but under numerous assumptions that are not applicable to our case.

So far for ADMM we have talked about solution to~\eqref{prob:mrf-decomposed} only and not to~\eqref{prob:relax}. In fact, we have the following result.
\begin{proposition}\label{propos:KKT-stationary}
If $(\x^*,\x^*,\ldots,\x^*,\y^*)$ is a KKT point of~\eqref{prob:mrf-decomposed} then $\x^*$ is a stationary point of~\eqref{prob:relax}.
\end{proposition}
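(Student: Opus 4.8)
The plan is to unwind the KKT conditions of~\eqref{prob:mrf-decomposed} at the point $(\x^*,\ldots,\x^*,\y^*)$ and to show that the variational inequality~\eqref{eq:stationary} characterizing stationarity of~\eqref{prob:relax} at $\x^*$ falls out after summing the per-block optimality conditions.

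First I would record the consequences of Definition~\ref{def:kkt}. The linear constraint~\eqref{eq:kkt-linear-constraint} together with~\eqref{eq:equality-constraint} already gives $\x^{*1}=\cdots=\x^{*D}=\x^*$, so the point genuinely has the consensus form. Next, I fix an index $d$ and freeze all other blocks at $\x^*$. By multilinearity of $F$ and~\eqref{eq:energy-factor-linear}, the objective in the argmin condition~\eqref{eq:kkt-argmin)} is an affine function of $\x^d$, equal to $\inner{\p^{*d}+\A^{d\top}\y^*,\x^d}$ up to an additive constant, where $\p^{*d}$ is the vector~\eqref{eq:p} evaluated at $(\x^*,\ldots,\x^*)$; equivalently $\p^{*d}=\nabla_{\x^d}F(\x^*,\ldots,\x^*)$, the partial gradient of $F$ in its $d$-th block. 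Since $\x^*$ minimizes this linear functional over the closed convex set $\cX^d$, it is in particular a local minimizer there, so the necessary optimality condition of Definition~\ref{def:stationary} applies and gives
\begin{equation*}
\left(\p^{*d}+\A^{d\top}\y^*\right)^\top(\x^d-\x^*)\ge 0\qquad\forall\,\x^d\in\cX^d,\quad d=1,\ldots,D.
\end{equation*}

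Then I would specialize and sum. For any $\x\in\cX$, relation~\eqref{eq:set-intersection} gives $\x\in\cX^d$ for every $d$, so the inequality above holds with $\x^d=\x$ for each $d$; adding these $D$ inequalities yields $\big(\sum_{d=1}^D\p^{*d}+\sum_{d=1}^D\A^{d\top}\y^*\big)^\top(\x-\x^*)\ge 0$. Two elementary identities then finish the argument. Applying the chain rule to $E(\x)=F(\x,\ldots,\x)$ gives $\sum_{d=1}^D\p^{*d}=\sum_{d=1}^D\nabla_{\x^d}F(\x^*,\ldots,\x^*)=\nabla E(\x^*)$. And instantiating~\eqref{eq:equality-constraint} with all blocks equal to an arbitrary $\x$ shows $\sum_{d=1}^D\A^d\x=\mathbf 0$ for every $\x$, hence $\sum_{d=1}^D\A^d=\mathbf 0$ and therefore $\sum_{d=1}^D\A^{d\top}\y^*=\mathbf 0$. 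Substituting, $\nabla E(\x^*)^\top(\x-\x^*)\ge 0$ for all $\x\in\cX$, which is exactly~\eqref{eq:stationary}; that is, $\x^*$ is a stationary point of~\eqref{prob:relax}.

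There is no deep obstacle here — the proof is essentially bookkeeping — but two points demand a little care. The first is the identification $\sum_{d}\nabla_{\x^d}F(\x^*,\ldots,\x^*)=\nabla E(\x^*)$, which must be justified through the chain rule applied to $\x\mapsto F(\x,\ldots,\x)$ (using that $F$ is linear in each block), rather than merely asserted. The second is the companion observation that the multiplier contributions $\A^{d\top}\y^*$ sum to zero, which holds because the columns of $[\A^1\ \cdots\ \A^D]$ sum to zero by the very construction of the consensus constraint~\eqref{eq:equality-constraint}. Once both are in place, all Lagrangian terms vanish and only the desired stationarity statement for~\eqref{prob:relax} survives.
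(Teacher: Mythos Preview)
Your proof is correct and follows the same overall route as the paper: extract the per-block first-order optimality inequalities from the KKT argmin condition, restrict to $\x\in\cX$ via~\eqref{eq:set-intersection}, sum over $d$, kill the multiplier term using~\eqref{eq:equality-constraint}, and identify the remaining sum with $\nabla E(\x^*)$. The one substantive difference is in how you establish $\sum_{d}\p^{*d}=\nabla E(\x^*)$: the paper proves this by unpacking the explicit tensor formula~\eqref{eq:p} for each $\p_i^{*d}$ and arguing combinatorially that the triple sum over $(d,\alpha,C)$ collapses to the sum over cliques containing $i$, recovering~\eqref{eq:mrf-energy-partial-derivative}; you instead invoke the chain rule on $E(\x)=F(\x,\ldots,\x)$, which is cleaner and avoids all the index bookkeeping. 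Your handling of the multiplier term is also marginally tidier---you conclude $\sum_d\A^d=\mathbf 0$ outright from~\eqref{eq:equality-constraint}, whereas the paper only uses $\sum_d\A^d\x=\sum_d\A^d\x^*=\mathbf 0$---but both reach the same cancellation.
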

An interesting relation of the solutions returned by the methods is the following. We say a method A can improve further a method B if we use the returned solution by B as initialization for A and A will output a better solution.

\begin{proposition}\label{propos:methods-compare}
At convergence:
\begin{enumerate}
\item BCD, PGD and FW cannot improve further each other.
\item BCD, PGD and FW cannot improve further ADMM. The inverse is not necessarily true.
\end{enumerate}
\end{proposition}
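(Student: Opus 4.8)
\emph{Proof plan.} Everything reduces to one observation: by Propositions~\ref{propos:PGD-FW-stationary},~\ref{propos:KKT} and~\ref{propos:KKT-stationary}, the point returned at convergence (a limit point of the iterates) by each of BCD, PGD, FW and ADMM is a stationary point of~\eqref{prob:relax} in the sense of Definition~\ref{def:stationary} --- for ADMM, via the KKT point of~\eqref{prob:mrf-decomposed}. Hence it suffices to prove (a) that any stationary point $\x^*$ of~\eqref{prob:relax} is a fixed point of the update of each of BCD, PGD and FW, so that none of these three can decrease $E$ when started from one another's --- or from ADMM's --- output; and (b) that the converse fails for ADMM.

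For (a) I would check the three update maps at $\x^*$ separately. PGD: the variational characterization of the Euclidean projection gives that $\x^*=\argmin_{\s\in\cX}\norm{\x^*-\nabla E(\x^*)-\s}$ is equivalent to $\nabla E(\x^*)^\top(\s-\x^*)\ge 0$ for all $\s\in\cX$, i.e. to~\eqref{eq:stationary}; with the line-search choice $\beta^{(k)}=1$ this yields $\s^{(k)}=\x^*$ in Algorithm~\ref{algo:PGD}, hence $\x^{(k+1)}=\x^*$ for any $\alpha^{(k)}$. FW: if $\s^{(k)}$ is the Frank--Wolfe vertex of Algorithm~\ref{algo:FW}, then $\nabla E(\x^*)^\top(\s^{(k)}-\x^*)\ge 0$ by stationarity and $\le 0$ by optimality of $\s^{(k)}$ for the linear subproblem, so the Frank--Wolfe gap vanishes, which is exactly the termination condition of FW. BCD: by~\eqref{eq:mrf-energy-partial-derivative}, $\nabla_i E(\x^*)=\c_i$, so the block-$i$ part of~\eqref{eq:stationary} reads $\c_i^\top(\x_i-\x_i^*)\ge 0$ over the simplex, i.e. $\x_i^*$ already solves the subproblem~\eqref{eq:BCD-update}; by Lemma~\ref{lem:bcd} the BCD update then does not change the energy of block $i$ (and, by the tie-break of Algorithm~\ref{algo:BCD}, does not change the iterate at all when $\x_i^*$ is discrete). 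Part 1 follows; and since an ADMM limit $(\x^*,\ldots,\x^*,\y^*)$ is a KKT point of~\eqref{prob:mrf-decomposed} (Proposition~\ref{propos:KKT}) with $\x^*$ stationary for~\eqref{prob:relax} (Proposition~\ref{propos:KKT-stationary}), none of BCD, PGD, FW can improve it --- the first assertion of part 2.

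For the second assertion of part 2 it is enough to exhibit one instance. The key point is that Proposition~\ref{propos:KKT-stationary} is only a one-way implication: a stationary point $\x^*$ of~\eqref{prob:relax} need not lift to a KKT point $(\x^*,\ldots,\x^*,\y)$ of~\eqref{prob:mrf-decomposed}. Concretely, with $\cX^d=\cX^+$ for $d\ge 2$ the conditions~\eqref{eq:kkt-argmin)} demand a multiplier $\y$ with $\p^d+\A^{d\top}\y\ge\mathbf{0}$ and equality on the support of $\x^*$ --- a linear system that can be infeasible. When it is, the ADMM iteration of Algorithm~\ref{algo:ADMM-general} initialized at $\x^d=\x^*$ (with $\y^{(0)}=\mathbf{0}$) cannot remain at $\x^*$: the $\x$-subproblems~\eqref{eq:update-x} move off $\x^*$, and ADMM converges (for $\rho$ large enough --- see the discussion after Proposition~\ref{propos:KKT}) to a KKT point, hence to a discrete configuration via the rounding of Section~\ref{sec:relaxation}, of strictly smaller energy. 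A small pairwise MRF already realizes this; alternatively one may simply invoke Section~\ref{sec:experiments}, where ADMM strictly improves on BCD/PGD/FW solutions on many real-world instances.

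The step I expect to be the main obstacle is the BCD case of (a). When the incoming stationary point $\x^*$ is not discrete, a BCD sweep replaces each non-discrete block $\x_i^*$ by a vertex; this keeps that block's own contribution to $E$ unchanged, but it alters the cost vectors $\c_j$ of the blocks updated later in the same sweep, so one must argue carefully that no strict decrease is created downstream (equivalently, that $E$ after a full sweep still equals $E(\x^*)$). Making this airtight --- e.g.\ by restricting attention to the discrete stationary points actually output by the algorithms, or by fixing a common tie-breaking rule in the block subproblems --- is the delicate part; the PGD and FW verifications and the construction in part 2 are then routine.
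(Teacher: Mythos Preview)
Your approach is the same as the paper's, but you are considerably more careful. The paper's own proof is a two-sentence sketch: point 1 ``follows from the fact that solutions of BCD, PGD and FW are stationary,'' point 2 ``follows from Proposition~\ref{propos:KKT-stationary},'' and the last assertion is justified only by ``in practice, we observed that ADMM can often improve further the other methods.'' You unpack all of this: you verify explicitly (via the projection characterization and the Frank--Wolfe gap) that stationary points are fixed/termination points of PGD and FW, which the paper leaves implicit; and for the last assertion of part 2 you give a structural reason --- Proposition~\ref{propos:KKT-stationary} is only a one-way implication, so a stationary $\x^*$ of~\eqref{prob:relax} need not lift to a KKT point of~\eqref{prob:mrf-decomposed} --- where the paper simply points to experiments.

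The obstacle you flag for BCD is real and is not addressed by the paper either. Concretely, take a two-node pairwise MRF with no unaries and $\F_{12}=\bigl(\begin{smallmatrix}0&-1\\-1&0\end{smallmatrix}\bigr)$: the point $\x_1^*=\x_2^*=(1/2,1/2)$ is stationary (and a fixed point of PGD), has $E(\x^*)=-1/2$, yet one BCD sweep produces a vertex pair with $E=-1$. So ``BCD cannot improve a stationary point'' is literally false without an extra hypothesis; your proposed fix --- restrict to the discrete outputs actually produced (BCD is used as the final rounding step in the paper, cf.\ Section~\ref{sec:experiments}) --- is exactly the right reading. With that convention, each algorithm's output is a \emph{discrete} stationary point, Algorithm~\ref{algo:BCD}'s tie-break keeps every block fixed, and your argument goes through. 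In short: same route as the paper, but your version is more rigorous and honestly identifies the gap that the paper's sketch glosses over.
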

The first point follows from the fact that solutions of BCD, PGD and FW are stationary. The second point follows from Proposition~\ref{propos:KKT-stationary}. In practice, we observed that ADMM can often improve further the other methods.

\section{Experiments}\label{sec:experiments}
We compare the proposed nonconvex relaxation methods (BCD, PGD, FW and ADMM with cyclic decomposition) with the following ones (where the first four are only applicable to pairwise MRFs): $\alpha$-expansion ($\alpha$-Exp)~\cite{boykov2001fast}, fast primal-dual (FastPD)~\cite{komodakis2008performance}, convex QP relaxation (CQP)~\cite{ravikumar2006quadratic}, sequential tree reweighted message passing (TRWS)~\cite{kolmogorov2006convergent}, tree reweighted belief propagation (TRBP)~\cite{wainwright2005map}, alternating direction dual decomposition (ADDD)~\cite{martins2015ad3}, bundle dual decomposition\footnote{We also included subgradient dual decomposition~\cite{komodakis2011mrf} but found that its performance was generally worse than bundle dual decomposition, thus we excluded it from the presented results.}~(BUNDLE)~\cite{kappes2012bundle}, max-product linear programming (MPLP)~\cite{globerson2008fixing} and its extension (MPLP-C)~\cite{sontag2012efficiently}, extension of $\alpha$-expansion to higher-order using reduction technique ($\alpha$-Fusion)~\cite{fix2011graph}, generalization of TRWS to higher-order (SRMP)~\cite{kolmogorov2015new}. The code of most methods are obtained via either the OpenGM library~\cite{opengm-library} or from the authors' websites, except for CQP~\cite{ravikumar2006quadratic} we use our implementation as no code is publicly available (\cf supplement for implementation details).

For BCD, PGD and FW, we run for $5$ different initializations (solution of the unary potentials plus $4$ other completely random) and pick the best one. For ADMM, we use a \emph{single} homogeneous initial solution: $x_i(s) = \frac{1}{\abs{\cS_i}}\forall s\in \cS_i$ (we find that ADMM is quite insensitive to initialization). For these methods, BCD is used as a final rounding step.\footnote{BCD cannot improve further the solution according to Proposition~\ref{propos:methods-compare}.}

\begin{table}[!htb]
\small
\centering
\caption{List of models used for evaluation.}
\label{tab:models}
\setlength{\tabcolsep}{0.8mm}
\begin{tabular}{lcccccc}
	\toprule
	Model                        & No.\textsuperscript{$*$} & $\abs{\cV}$\textsuperscript{$**$}  & $S$\textsuperscript{$\dagger$} & $D$\textsuperscript{$\ddagger$} &  Structure  &      Function       \\ \midrule
	Inpainting                   &  4  &    14400     &      4      &  2  & grid-N4/N8  &          Potts           \\
	Matching                     &  4  &   $\sim$20   &  $\sim$20   &  2  & full/sparse &         general          \\
	1\textsuperscript{st} stereo &  3  & $\sim$100000 &    16-60    &  2  &   grid-N4   & 		TL/TS \\
	Segmentation                 & 10  &     1024     &      4      &  4  &   grid-N4   &         g-Potts          \\
	2\textsuperscript{nd} stereo &  4  & $\sim$25000  &     14      &  3  &   grid-N4   &         general          \\ \bottomrule
\multicolumn{7}{c}{\textsuperscript{$*$},\textsuperscript{$**$},\textsuperscript{$\dagger$},\textsuperscript{$\ddagger$}: number of instances, variables, labels, and MRF degree}
\end{tabular}
\end{table}%\vspace*{-0.5cm}

The methods are evaluated on several real-world vision tasks: image inpainting, feature matching, image segmentation and stereo reconstruction. All methods are included whenever applicable. A summary of the models are given in Table~\ref{tab:models}. Except for higher-order stereo, these models were previously considered in a recent benchmark for evaluating MRF optimization methods~\cite{kappes2015ijcv}, and their model files are publicly available\footnote{\url{http://hciweb2.iwr.uni-heidelberg.de/opengm/index.php?l0=benchmark}}. For higher-order stereo, we use the model presented in~\cite{woodford2009global}, where the disparity map is encouraged to be piecewise smooth using a second-order prior, and the labels are obtained from $14$ pre-generated piecewise-planar proposals. We apply this model to $4$ image pairs (\emph{art, cones, teddy, venus}) of the Middlebury dataset~\cite{scharstein2003high} (at half resolution, due to the high inference time). We refer to~\cite{kappes2015ijcv} and to the supplement for further details on all models.

%\begin{table*}[t]
%\small
%\centering
%\caption{List of models used for evaluation}
%\label{tab:models}
%\begin{tabular}{lcccccc}
%	\toprule
%	Model               & Instances &  Variables   &  Labels  & Order &  Structure  &      Function type       \\ \midrule
%	Image inpainting    &     4     &    14400     &    4     &   2   & grid-N4/N8  &          Potts           \\
%	Feature matching    &     4     &   $\sim$20   & $\sim$20 &   2   & full/sparse &         general          \\
%	Pairwise stereo     &     3     & $\sim$100000 &  16-60   &   2   &   grid-N4   & Tr.linear/squared \\
%	Image segmentation  &    10     &     1024     &    4     &   4   &   grid-N4   &         g-Potts          \\
%	Higher-order stereo &     4     & $\sim$25000  &    14    &   3   &   grid-N4   &         general          \\ \bottomrule
%\end{tabular}
%\end{table*}

\begin{table*}[t!]
\scriptsize
\centering
\caption{Results on pairwise models.}
\label{tab:pairwise}
\setlength{\tabcolsep}{1.5mm}
\begin{tabular}{lrarrarrarrar}
	\toprule
				& \multicolumn{3}{c}{Inpainting N4 (2 instances)} & \multicolumn{3}{c}{Inpainting N8 (2 instances)} & \multicolumn{3}{c}{Feature matching (4 instances)} & \multicolumn{3}{c}{Pairwise stereo (3 instances)}\\ \toprule
	algorithm   &     time (s) &   value &   bound & time (s) 	 &   value &   bound  &     time (s) &     value &   bound &     time (s) &     value &   bound   \\ \midrule
	$\alpha$-Exp& $   0.02$ & $ \mathbf{454.35}$ & $-\infty$ & $   0.78$ & $ 465.02$ & $-\infty$ & $  -^*$ &   $  -^*$ & $-^*$ &   $  14.75$ & $   1617196.00$ &       $-\infty$  \\
	FastPD      & $   0.03$ & $ 454.75$ & $ 294.89$ & $   0.15$ & $ 465.02$ & $ 136.28$ & $  -^*$ &   $  -^*$ & $-^*$ &   $   7.14$ & $   1614255.00$ & $301059.33$   \\ \cmidrule{1-1} 
    TRBP        & $  23.45$ & $ 480.27$ & $-\infty$ & $  64.00$ & $ 495.80$ & $-\infty$ & $   0.00$ & $1.05\times 10^{11}$ & $-\infty$ &  $2544.12$ & $   1664504.33$ &       $-\infty$ \\\cmidrule{1-1} 
	ADDD 		& $  15.87$ & $ 483.41$ & $ 443.71$ & $  35.78$ & $ 605.14$ & $ 450.95$ & $   3.16$ & $1.05\times 10^{11}$ & $  16.35$ &  $-^{**}$ &         $-^{**}$  		&         $-^{**}$\\
	MPLP        & $  55.32$ & $ 497.16$ & $ 411.94$ & $ 844.97$ & $ 468.97$ & $ 453.55$ & $   0.47$ &  $0.65\times 10^{11}$ & $  15.16$ &  $-^{**}$ &         $-^{**}$ &         $-^{**}$\\
	MPLP-C      & $1867.20$ & $ 468.88$ & $ 448.03$ & $2272.39$ & $ 479.54$ & $ 454.35$ & $   6.04$ &   $  \mathbf{21.22}$ & $  21.22$ &  $-^{**}$ &         $-^{**}$ &         $-^{**}$\\
	BUNDLE      & $  36.18$ & $ 455.25$ & $ 448.23$ & $ 111.74$ & $ 465.26$ & $ 455.43$ & $   2.33$ &  $0.10\times 10^{11}$ & $  14.47$ &  $2039.47$ & $   1664707.67$ & $   1583742.13$\\
	TRWS        & $   1.37$ & $ 490.48$ & $ 448.09$ & $  16.23$ & $ 500.09$ & $ 453.96$ & $   0.05$ &   $  64.19$ & $  15.22$ &  $ 421.20$ & $   \mathbf{1587961.67}$ & $   1584746.58$\\ \cmidrule{1-1} 
    CQP 		& $   1.92$ & $1399.51$ & $-\infty$ & $  11.62$ & $1178.91$ & $-\infty$ & $   0.08$ &   $ 127.01$ & $-\infty$ &  $3602.01$ & $  11408446.00$ &       $-\infty$\\ \cmidrule{1-1} 
	BCD         & $   0.11$ & $ 485.88$ & $-\infty$ & $   0.29$ & $ 481.95$ & $-\infty$ & $   0.00$ &   $  84.86$ & $-\infty$ &  $  10.82$ & $   7022189.00$ &       $-\infty$\\
	FW          & $   1.10$ & $ 488.23$ & $-\infty$ & $   5.94$ & $ 489.82$ & $-\infty$ & $  20.10$ &   $  66.71$ & $-\infty$ &  $1989.12$ & $   6162418.00$ &       $-\infty$\\
	PGD         & $   0.81$ & $ 489.80$ & $-\infty$ & $   5.19$ & $ 489.82$ & $-\infty$ & $  13.21$ &   $  58.52$ & $-\infty$ &  $1509.49$ & $   5209092.33$ &       $-\infty$\\ 
	ADMM 		& $   9.84$ & $ \mathbf{454.35}$ & $-\infty$ & $  40.64$ & $ \mathbf{464.76}$ & $-\infty$ & $   0.31$ &   $  75.12$ & $-\infty$ &  $2377.66$ & $   1624106.00$ &       $-\infty$\\\bottomrule
	\multicolumn{13}{c}{$^*$Method not applicable $^{**}$Prohibitive execution time (time limit not working) or prohibitive memory consumption}
\end{tabular}
\end{table*}% \vspace*{-0.7cm}

The experiments were carried out on a $64$-bit Linux machine with a $3.4$GHz processor and $32$GB of memory. A time limit of $1$ hour was set for all methods. In Tables~\ref{tab:pairwise} and~\ref{tab:high-order}, we report the runtime\footnote{For a fair comparison, we used the \emph{single-thread} version of ADMM.}, the energy value of the final integer solution as well as the lower bound if available, averaged over all instances of a particular model. The detailed results are given in the supplement.

In general, ADMM significantly outperforms BCD, PGD, FW and is the only nonconvex relaxation method that compares favorably with the other methods. In particular, it outperforms TRBP, ADDD, BUNDLE, MPLP, MPLP-C and CQP on all models (except MPLP-C on matching), and outperforms FastPD, $\alpha$-Exp/$\alpha$-Fusion and TRWS on small or medium sized models (\ie other than stereo).

On image inpainting (Table~\ref{tab:pairwise}), ADMM produces the lowest energies on all instances, while being relatively fast. Surprisingly TRWS performs poorly on these models, even worse than BCD, PGD and FW.

The feature matching model (Table~\ref{tab:pairwise}) is a typical example showing that the standard LP relaxation can be very loose. All methods solving its dual produce very poor results (despite reaching relatively good lower bounds). They are largely outperformed by TRWS and nonconvex relaxation methods (BCD, PGD, FW, ADMM). On this problem, MPLP-C reaches the global optimum for all instances.

\begin{table}[H]
\scriptsize
\centering
\caption{Results on higher-order models.}
\label{tab:high-order}
\setlength{\tabcolsep}{1.2mm}
\begin{tabular}{lrarrar}
	\toprule
					& \multicolumn{3}{c}{Segmentation (10 instances)} & \multicolumn{3}{c}{Second-order stereo (4 instances)}\\ \toprule
	algorithm       &  time (s) &              value &     bound &  time (s) &              value &     bound \\ \midrule
	$\alpha$-Fusion & $   0.05$ &          $1587.13$ & $-\infty$ & $  50.03$ & $ 14035.91$ & $-\infty$   \\ \cmidrule{1-1} 
    TRBP         	& $  18.20$ &          $1900.84$ & $-\infty$ & $3675.90$ & $ 14087.40$ & $-\infty$   \\ \cmidrule{1-1} 
	ADDD	 		& $   6.36$ &          $3400.81$ & $1400.33$ & $4474.83$ & $ 14226.93$ & $ 13752.73$ \\ 
	MPLP            & $   9.68$ &          $4000.44$ & $1400.30$ & $  -^*$	 & $        -^*$ & $-^*$ 		 \\ 
	MPLP-C          & $3496.50$ &          $4000.41$ & $1400.35$ & $  -^*$	 & $        -^*$ & $-^*$ 		 \\ 
	BUNDLE          & $ 101.56$ &          $4007.73$ & $1392.01$ & $3813.84$ & $ 15221.19$ & $ 13321.96$ \\ 
	SRMP            & $   0.13$ & $\mathbf{1400.57}$ & $1400.57$ & $3603.41$ & $ \mathbf{13914.82}$ & $ 13900.87$ \\ \cmidrule{1-1} 
	BCD             & $   0.14$ &    $ 12518.59$ & $-\infty$ & $  59.59$ & $ 14397.22$ & $-\infty$   \\ 
	FW              & $  21.23$ &          $5805.17$ & $-\infty$ & $1749.19$ & $ 14272.54$ & $-\infty$   \\ 
	PGD             & $  51.04$ &          $5513.02$ & $-\infty$ & $3664.92$ & $ 14543.65$ & $-\infty$   \\
 	ADMM 			& $  97.37$ &          $1400.68$ & $-\infty$ & $3662.13$ & $ 14068.53$ & $-\infty$   \\  \bottomrule
 	\multicolumn{7}{c}{$^*$Prohibitive execution time (time limit not working)}
\end{tabular}
\end{table}

On image segmentation (Table~\ref{tab:high-order}), SRMP performs exceptionally well, producing the global optimum for all instances while being very fast. ADMM is only slightly outperformed by SRMP in terms of energy value, while both clearly outperform the other methods.

On large scale models such as stereo, TRWS/SRMP perform best in terms of energy value, followed by move making algorithms (FastPD, $\alpha$-Exp/$\alpha$-Fusion) and ADMM. An example of estimated disparity maps is given in Figure~\ref{fig:stereo} for SRMP and nonconvex relaxation methods. Results for all methods are given in the supplement.

An interesting observation is that CQP performs worse than nonconvex methods on all models (and worst overall), which means simply solving the QP relaxation in a straightforward manner is already better than adding a sophisticated convexification step. This finding is for us rather surprising.

\begin{figure}[!htb]
 \centering
     \begin{subfigure}[t]{0.30\linewidth}
     \centering
         \includegraphics[width=\linewidth]{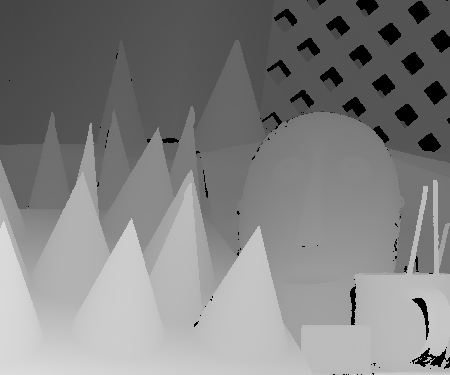}\vspace{-5pt}
         \caption{\scriptsize Ground-truth}
     \end{subfigure}%
     ~
     \begin{subfigure}[t]{0.30\linewidth}
     \centering
             \includegraphics[width=\linewidth]{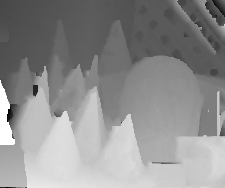}\vspace{-5pt}
             \caption{\scriptsize SRMP (\textbf{18433.01})}
         \end{subfigure}%
     ~
     \begin{subfigure}[t]{0.30\linewidth}
     \centering
             \includegraphics[width=\linewidth]{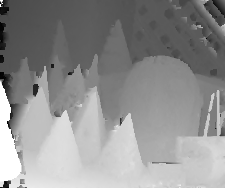}\vspace{-5pt}
             \caption{\scriptsize BCD (18926.70)}
         \end{subfigure}\\%%
     \begin{subfigure}[t]{0.30\linewidth}
     \centering
             \includegraphics[width=\linewidth]{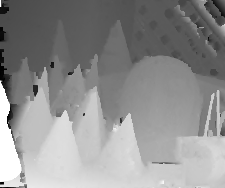}\vspace{-5pt}
             \caption{\scriptsize FW (18776.26)}
         \end{subfigure}%
     ~
     \begin{subfigure}[t]{0.30\linewidth}
    \centering
            \includegraphics[width=\linewidth]{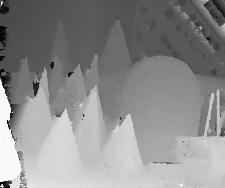}\vspace{-5pt}
            \caption{\scriptsize PGD (19060.17)}
        \end{subfigure}%
     ~
     \begin{subfigure}[t]{0.30\linewidth}
     \centering
             \includegraphics[width=\linewidth]{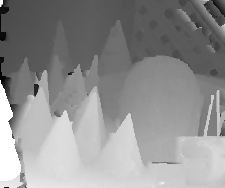}\vspace{-5pt}
             \caption{\scriptsize ADMM (18590.87)}
         \end{subfigure}
     \caption{\label{fig:stereo}Estimated disparity maps and energy values on higher-order stereo model.}
 \end{figure}
 
 \section{Conclusion}\label{sec:conclusion}
 We have presented a tight nonconvex relaxation for the problem of MAP inference and studied four different methods for solving it: block coordinate descent, projected gradient descent, Frank-Wolfe algorithm, and ADMM. Due to the high nonconvexity, it is very challenging to obtain good solutions to this relaxation, as shown by the performance of the first three methods. The latter, however, outperforms many existing methods and thus demonstrates that directly solving the nonconvex relaxation can lead to very accurate results. These methods are memory efficient, thanks to the small number of variables and constraints (as discussed in Section~\ref{sec:relaxation}). On top of that, the proposed ADMM algorithm is also highly parallelizable (as discussed in Section~\ref{sec:admm}), which is not the case for methods like TRWS or SRMP. Therefore, ADMM is also suitable for distributed or real-time applications on GPUs.
%\vfill
%\noindent\textbf{Acknowledgements.} We thank the anonymous reviewers for their insightful comments and suggestions.

%\clearpage
\paragraph{Acknowledgements} This research was partially supported by the European Research Council Starting Grant DIOCLES (ERC-STG-259112) and the PUF 4D Vision project (Partner University Fund). The authors thank Jean-Christophe Pesquet for useful discussion on gradient-based methods, and thank the anonymous reviewers for their insightful comments.
{\small
\bibliographystyle{ieee}
\bibliography{ref}
%\bibliography{/home/khue/Dropbox/Papers/MRFs/OpenGM/paper-detailed/ref}
%\bibliography{C:/Users/huudi/Dropbox/Papers/MRFs/OpenGM/paper-detailed/ref}
}

%\maketitle

\makenewtitle{\bf\Large Continuous Relaxation of MAP Inference: A Nonconvex Perspective\\ {\large Supplementary Material}}

\begin{abstract} %\vspace{-4pt}
   We give proofs of the presented theoretical results in Appendix~\ref{append:proofs}, implementation details of the methods in Appendix~\ref{append:methods}, and experiment details in Appendix~\ref{append:experiments}.
\end{abstract}  %\vspace{-5pt}

%%%%%%%%% BODY TEXT
\renewcommand\thesection{\Alph{section}}
\renewcommand\thesubsection{\thesection.\arabic{subsection}}
\setcounter{section}{0}%
\section{Proofs}\label{append:proofs}
\subsection{Proof of Proposition~\ref{propos:bcd-fixed-point}}
Clearly, BCD stops when there is no \emph{strict} descent of the energy. Since the solution at each iteration is discrete and the number of nodes as well as the number of labels are finite, BCD must stop after a finite number of iterations. Suppose that this number is $k$: $E(\x^{(k+1)}) = E(\x^{(k)})$. At each inner iteration (\ie Step 2 in Algorithm~\ref{algo:BCD}), the label of a node is changed to a new label only if the new label can produce \emph{strictly} lower energy. Therefore, the labeling of $\x^{(k+1)}$ and $\x^{(k)}$ must be the same because they have the same energy, which implies $\x^{(k+1)} = \x^{(k)}$, \ie
$\x^{(k)}$ is a fixed point. 

\subsection{Proof of Equation~(\ref{eq:p})}

Recall from~\eqref{eq:energy-decomposed} that
\begin{equation}\label{eq:energy-decomposed-alpha}
F(\x^1,\ldots,\x^D) = \sum_{\alpha=1}^D \sum_{i_1\ldots i_\alpha\in\cC} \F_{i_1\ldots i_\alpha} \bigotimes \set{\x^1_{i_1},\ldots,\x^\alpha_{i_\alpha}},
\end{equation}
Clearly, the terms corresponding to any $\alpha < d$ do not involve $\x^d$. Thus, we can rewrite the above as
\begin{multline}
F(\x^1,\ldots,\x^D) = \mathrm{cst(\x^d)} \\ +  \sum_{\alpha=d}^D \sum_{i_1\ldots i_\alpha\in\cC} \F_{i_1\ldots i_\alpha} \bigotimes \set{\x^1_{i_1},\ldots,\x^\alpha_{i_\alpha}}.\label{eq:p-double-sum}
\end{multline}
We will show that the last double sum can be written as $\sum_{i\in\cV}\inner{\p_i^d,\x_i^d}$, where $\p_i^d$ is given by~\eqref{eq:p}. The idea is to regroup, for each node $i$, all terms that contain $\x_i$. Indeed, for a given $d$ we have the identity:
\begin{equation}
\sum_{i_1i_2\ldots i_\alpha\in\cC} =  \sum_{i_d\in\cV} \sum_{i_1\ldots i_{d-1}i_di_{d+1}\ldots i_\alpha \in\cC}.
\end{equation}
Therefore, the double sum in~\eqref{eq:p-double-sum} becomes
\begin{equation}
\sum_{\alpha=d}^D\sum_{i_d\in\cV} \sum_{i_1\ldots i_{d-1}i_di_{d+1}\ldots i_\alpha \in\cC} \F_{i_1\ldots i_\alpha} \bigotimes \set{\x^1_{i_1},\ldots,\x^\alpha_{i_\alpha}}.
\end{equation}
Rearranging the first and second sums we obtain
\begin{equation}
\sum_{i_d\in\cV} \sum_{\alpha=d}^D \sum_{i_1\ldots i_{d-1}i_di_{d+1}\ldots i_\alpha \in\cC} \F_{i_1\ldots i_\alpha} \bigotimes \set{\x^1_{i_1},\ldots,\x^\alpha_{i_\alpha}}.
\end{equation}
With the change of variable $i\gets i_d$ this becomes
\begin{multline}
\sum_{i\in\cV} \sum_{\alpha=d}^D \sum_{i_1\ldots i_{d-1}ii_{d+1}\ldots i_\alpha \in\cC} \F_{i_1\ldots i_\alpha} \bigotimes \set{\x^1_{i_1},\ldots,\x^\alpha_{i_\alpha}}.
\end{multline}
Now by factoring out $\x_i^d$ for each $i\in\cV$ the above becomes

\begin{multline}
\sum_{i\in\cV} \left(\sum_{\alpha=d}^D \sum_{i_1\ldots i_{d-1}ii_{d+1}\ldots i_\alpha \in\cC} \F_{i_1i_2\ldots i_\alpha} \bigotimes \right. \\ \left. \set{\x^1_{i_1},\ldots,\x^{d-1}_{i_{d-1}}, \x^{d+1}_{i_{d+1}},\ldots,\x^\alpha_{i_\alpha}} \vphantom{\sum_{i_1\ldots i_{d-1}ii_{d+1}\ldots i_\alpha \in\cC}}\right)^\top \x_i^d,
\end{multline}
which is $\sum_{i\in\cV}\inner{\p_i^d,\x_i^d}$, where $\p_i^d$ is given by~\eqref{eq:p}, QED.

\subsection{Proof of Equations~(\ref{eq:cyclic-c1})--(\ref{eq:cyclic-cD})}
See Appendix~\ref{append:admm}, page~\pageref{append:admm} on the details of ADMM.

\subsection{Proof of Proposition~\ref{propos:PGD-FW-stationary}}
For PGD and FW, the result holds for general continuously differentiable function $E(\cdot)$ and closed convex set $\cX$. We refer to~\cite{bertsekas1999nonlinear} (Sections 2.2.2 and 2.3.2) for a proof. Below we give a proof for BCD.

In Proposition~\ref{propos:bcd-fixed-point} we have shown that BCD reaches a discrete fixed point $\x^{(k)}$ after a finite number of iterations $k$. Now, we show that this fixed point is stationary. Define $\Delta_i = \set{\u\in\RR^{\abs{\cS_i}}: \u\ge \mathbf{0}, \mathbf{1}^\top \u = 1}\ \forall i\in\cV$ and let $\x^* = \x^{(k+1)}=\x^{(k)}$. At the last $i\textsuperscript{th}$ inner iteration~\eqref{eq:BCD-update} we have:
\begin{equation}
E(\x_{[1,i-1]}^{(k+1)},\x_i,\x_{[i+1,n]}^{(k)}) \ge E(\x_{[1,i-1]}^{(k+1)},\x_i^{(k+1)},\x_{[i+1,n]}^{(k)})
\end{equation}
for all $\x_i\in\Delta_i$, which is
\begin{equation}
E(\x_{[1,i-1]}^*,\x_i,\x_{[i+1,n]}^*) \ge E(\x_{[1,i-1]}^*,\x_i^*,\x_{[i+1,n]}^*)
\end{equation}
for all $\x_i\in\Delta_i$. Define for each $i$ the function
\begin{equation}
E_i^*(\x_i) = E(\x_1^*,\ldots,\x_{i-1}^*,\x_i,\x_{i+1}^*,\ldots,\x_n^*).
\end{equation}
Obviously $E_i^*(\x_i)$ is continuously differentiable as it is linear. Since $\x_i^*$ is a minimizer of $E_i^*(\x_i)$ over $\Delta_i$, which is closed and convex, according to~\eqref{eq:stationary} (which is a necessary optimality condition) we have $\nabla E_i^*(\x_i^*)^\top(\x_i - \x_i^*)\ge 0\ \forall \x_i\in\Delta_i$. Notice that
\begin{equation}
\nabla E(\x^*) = \begin{bmatrix}
\frac{\partial E(\x^*)}{\partial \x_1} \\
\vdots \\
\frac{\partial E(\x^*)}{\partial \x_n}
\end{bmatrix} =
\begin{bmatrix}
\nabla E_1^*(\x_1^*) \\
\vdots \\
\nabla E_n^*(\x_n^*)
\end{bmatrix},
\end{equation}
we have
\begin{equation}
\nabla E(\x^*)^\top(\x - \x^*) =  \sum_{i=1}^n   \nabla E_i^*(\x_i^*)^\top(\x_i - \x_i^*).
\end{equation}
Since each term in the last sum is non-negative, we have $\nabla E(\x^*)^\top(\x - \x^*) \ge 0\ \forall\x\in\cX$, \ie $\x^*$ is stationary.

\subsection{Proof of Proposition~\ref{propos:KKT}}
By Definition~\ref{def:kkt}, a point $(\x^1,\ldots,\x^D,\y)$ is a KKT of~\eqref{prob:mrf-decomposed} if and only if it has the form $(\x^*,\ldots,\x^*,\y^*)$ (where $\x^*\in\cX$) and at the same time satisfies
\begin{equation}\label{eq:kkt-appendix}
\x^{*d} \in\argmin_{\x^d\in\cX^d} \set{F(\x^*,\ldots,\x^*,\x^d,\x^*,\ldots,\x^*) + \y^{*\top}\A^d\x^d}
\end{equation}
for all $d$, which is \emph{equivalent} to
\begin{multline}\label{eq:kkt-appendix-inequality}
\left(\frac{\partial F}{\partial \x^d}(\x^*,\ldots,\x^*) + \A^{d\top}\y^*\right)^\top(\x^d-\x^*) \ge 0\\ \forall\x^d\in\cX^d,\forall d.
\end{multline}
The equivalence (``$\Leftrightarrow$'') follows from the fact that the objective function (with respect to $\x^d$) in~\eqref{eq:kkt-appendix} is convex. This is a well-known result in convex analysis, which we refer to Bertsekas, Dimitri P., Angelia Nedi, and Asuman E. Ozdaglar. \emph{Convex analysis and optimization.}" (2003) (Proposition 4.7.2) for a proof. Note that from the necessary optimality condition~\eqref{eq:stationary} we can only have the ``$\Rightarrow$'' direction.

We need to prove that the sequence $\{(\x^{1^{(k)}},\ldots,\x^{D^{(k)}},\y^{(k)})\}$ generated by ADMM satisfies the above conditions (under the assumption that the residual $r^{(k)}$ converges to $0$).

Let $(\x^{*1},\x^{*2},\ldots,\x^{*D},\y^*)$ be a limit point of $\{(\x^{1^{(k)}},\ldots,\x^{D^{(k)}},\y^{(k)})\}$ (thus $\x^{*d}\in\cX^d\ \forall d$ since $(\cX^d)_{1\le d\le D}$ are closed), and define a subsequence that converges to this limit point by $\{(\x^{1^{(l)}},\ldots,\x^{D^{(l)}},\y^{(l)})\}$, $l\in\cL \subset \mathbb{N}$ where $\cL$ denotes the set of indices of this subsequence. We have
\begin{equation}\label{eq:lim-subsequence}
\lim\limits_{\substack{l\to +\infty \\ l\in\cL}} (\x^{1^{(l)}},\ldots,\x^{D^{(l)}},\y^{(l)}) = (\x^{*1},\x^{*2},\ldots,\x^{*D},\y^*).
\end{equation}
Since the residual $r^{(k)}$~\eqref{eq:residual} converges to $0$, we have
\begin{align}
\lim\limits_{\substack{l\to +\infty \\ l\in\cL}}\left(\sum_{d=1}^D\A^d\x^{d^{(l)}}\right) &= \mathbf{0},\label{eq:lim-primal-residual}\\
\lim\limits_{\substack{l\to +\infty \\ l\in\cL}}\left(\x^{d^{(l+1)}} - \x^{d^{(l)}}\right) &= \mathbf{0} \quad \forall d.\label{eq:lim-dual-residual}
\end{align}
On the one hand, combining~\eqref{eq:lim-subsequence} and~\eqref{eq:lim-dual-residual} we get
\begin{multline}\label{eq:lim-subsequence-2}
\lim\limits_{\substack{l\to +\infty \\ l\in\cL}} (\x^{1^{(l+1)}},\ldots,\x^{D^{(l+1)}},\y^{(l+1)}) \\ = (\x^{*1},\x^{*2},\ldots,\x^{*D},\y^*).
\end{multline}
(Note that the above is different from~\eqref{eq:lim-subsequence} because $l+1$ might not belong to $\cL$.) On the other hand, combining~\eqref{eq:lim-subsequence} and~\eqref{eq:lim-primal-residual} we get
\begin{equation}
\sum_{d=1}^D\A^d\x^{*d} = \mathbf{0},
\end{equation}
which is, according to~\eqref{eq:equality-constraint}, equivalent to
\begin{equation}
\x^{*1} = \x^{*2} = \cdots = \x^{*D}.
\end{equation}
Let $\x^*\in\cX$ denote the value of these vectors. From~\eqref{eq:lim-subsequence} and~\eqref{eq:lim-subsequence-2} we have
\begin{alignat}{3}
\lim\limits_{\substack{l\to +\infty \\ l\in\cL}} \x^{d^{(l)}} &= \lim\limits_{\substack{l\to +\infty \\ l\in\cL}} \x^{d^{(l+1)}} &&= \x^* \quad\forall d, \label{eq:lim-x} \\
\lim\limits_{\substack{l\to +\infty \\ l\in\cL}} \y^{(l)} &= \lim\limits_{\substack{l\to +\infty \\ l\in\cL}} \y^{(l+1)} &&= \y^*. \label{eq:lim-y}
\end{alignat}

It only remains to prove that $(\x^*,\ldots,\x^*,\y^*)$ satisfies~\eqref{eq:kkt-appendix-inequality}. Let us denote for convenience
\begin{equation}
\z_d^{(k)} = (\x^{[1,d]^{(k+1)}},\x^{[d+1,D]^{(k)}}) \quad\forall d.
\end{equation} 
According to~\eqref{eq:stationary}, the $\x$ update~\eqref{eq:update-x} implies
\begin{multline}\label{eq:dL-optimality}
\left(\frac{\partial L_\rho}{\partial \x^d} (\z_d^{(k)},\y^{(k)})\right)^\top \left(\x^d - \x^{d^{(k+1)}}\right) \\ \ge 0\quad\forall \x^d\in\cX^d, \forall d, \forall k.
\end{multline}
Since $L_\rho$~\eqref{eq:lagrangian} is continuously differentiable, applying~\eqref{eq:lim-x} and~\eqref{eq:lim-y} we obtain
\begin{multline}\label{eq:dL-lim}
\lim\limits_{\substack{l\to +\infty \\ l\in\cL}} \frac{\partial L_\rho}{\partial \x^d}(\z_d^{(l)},\y^{(l)}) = \frac{\partial L_\rho}{\partial \x^d}(\x^*,\ldots,\x^*,\y^*)\quad\forall d.
\end{multline}
Let $k=l$ in~\eqref{eq:dL-optimality} and take the limit of that inequality, taking into account~\eqref{eq:lim-x} and~\eqref{eq:dL-lim}, we get
\begin{equation}
\left(\frac{\partial L_\rho}{\partial \x^d}(\x^*,\ldots,\x^*,\y^*)\right)^\top(\x^d - \x^*) \ge 0\quad \forall \x^d\in\cX^d,\forall d.
\end{equation} 
From the definition of $L_\rho$~\eqref{eq:lagrangian} we have
\begin{align}
&\frac{\partial L_\rho}{\partial \x^d}(\x^*,\ldots,\x^*,\y^*) \notag \\
= &\frac{\partial F}{\partial \x^d}(\x^*,\ldots,\x^*) + \A^{d\top}\y^* + \rho\A^{d\top}\left(\sum_{d=1}^D \A^d\x^*\right) \notag \\
= &\frac{\partial F}{\partial \x^d}(\x^*,\ldots,\x^*) + \A^{d\top}\y^*.
\end{align}
Note that the last equality follows from~\eqref{eq:equality-constraint}. Plugging the above into the last inequality we obtain
\begin{multline}
\left(\frac{\partial F}{\partial \x^d}(\x^*,\ldots,\x^*) + \A^{d\top}\y^*\right)^\top(\x^d - \x^*) \ge 0\\ \forall \x^d\in\cX^d,\forall d,
\end{multline}
which is exactly~\eqref{eq:kkt-appendix-inequality}, and this completes the proof.

\subsection{Proof of Proposition~\ref{propos:KKT-stationary}}
Let $(\x^*,\ldots,\x^*,\y^*)$ be a KKT point of~\eqref{prob:mrf-decomposed}. We have seen in the previous proof that
\begin{multline}\label{eq:kkt-appendix-inequality-2}
\left(\frac{\partial F}{\partial \x^d}(\x^*,\ldots,\x^*) + \A^{d\top}\y^*\right)^\top(\x^d-\x^*) \ge 0\\ \forall\x^d\in\cX^d,\forall d.
\end{multline}

According to~\eqref{eq:energy-factor-linear}:
\begin{equation}
\frac{\partial F}{\partial \x^d}(\x^1,\ldots,\x^D) = \p^d,
\end{equation}
where $\p^d$ is defined by~\eqref{eq:p}. Now let $\p^{*d}$ be the value of $\p^d$ where $(\x^1,\ldots,\x^D)$ is replaced by $(\x^*,\ldots,\x^*)$, \ie $\p^{*d} = (\p_1^{*d},\ldots,\p_n^{*d})$ where
\begin{multline}\label{eq:p-star}
\p_i^{*d} = \sum_{\alpha=d}^D \left( \sum_{i_1\ldots i_{d-1}ii_{d+1}\ldots i_\alpha \in\cC} \F_{i_1i_2\ldots i_\alpha} \bigotimes \right. \\ \left.\set{\x^*_{i_1},\ldots,\x^*_{i_{d-1}}, \x^*_{i_{d+1}},\ldots,\x^*_{i_\alpha}} \vphantom{\sum_{C=i_1\ldots i_{d-1}ii_{d+1}\ldots i_\alpha \in\cC}}\right) \forall i\in\cV.
\end{multline}
Notice that $\frac{\partial F}{\partial \x^d}(\x^*,\ldots,\x^*) = \p^{*d}$,~\eqref{eq:kkt-appendix-inequality-2} becomes
\begin{equation}
\left(\p^{*d} + \A^{d\top}\y^*\right)^\top(\x^d-\x^*) \ge 0\quad \forall\x^d\in\cX^d,\forall d.
\end{equation}
According to~\eqref{eq:set-intersection} we have $\cX\subseteq\cX^d$ and therefore the above inequality implies
\begin{multline}
\left(\p^{*d} + \A^{d\top}\y^*\right)^\top(\x-\x^*) \ge 0\quad \forall\x\in\cX,\forall d.
\end{multline}
Summing this inequality for all $d$ we get
\begin{multline}
\left(\sum_{d=1}^D \p^{*d}\right)^\top(\x-\x^*) \\ + \y^{*\top}\left(\sum_{d=1}^D\A^d\right)(\x-\x^*) \ge 0 \quad \forall\x\in\cX.
\end{multline}
Yet, according to~\eqref{eq:equality-constraint} we have $\sum_{d=1}^D\A^d\x = \sum_{d=1}^D\A^d\x^* = \mathbf{0}$. Therefore, the second term in the above inequality is $0$, yielding
\begin{equation}
\left(\sum_{d=1}^D \p^{*d}\right)^\top(\x-\x^*) \ge 0 \quad \forall\x\in\cX.
\end{equation}
Now if we can prove that
\begin{equation}\label{eq:p-star-sum}
\sum_{d=1}^D \p^{*d} = \nabla E(\x^*),
\end{equation}
then we have $\nabla E(\x^*)^\top(\x-\x^*) \ge 0 \ \forall\x\in\cX$ and thus according to Definition~\ref{def:stationary}, $\x^*$ is a stationary point of~\eqref{prob:relax}. 

Let us now prove~\eqref{eq:p-star-sum}. Indeed, we can rewrite~\eqref{eq:p-star} as
\begin{multline}\label{eq:p-star-2}
\p_i^{*d} = \sum_{\alpha=d}^D \left( \sum_{\substack{C\in\cC \\ C = (i_1\ldots i_{d-1}ii_{d+1}\ldots i_\alpha)}} \F_{C}\bigotimes \set{\x^*_{j}}_{j\in C\setminus i}\right) \\ \forall i\in\cV.
\end{multline}
Therefore,
\begin{multline}\label{eq:p-star-sum-2}
\sum_{d=1}^D \p_i^{*d} = \sum_{d=1}^D  \sum_{\alpha=d}^D \sum_{\substack{C\in\cC \\ C = (i_1\ldots i_{d-1}ii_{d+1}\ldots i_\alpha)}} \F_{C}\bigotimes \set{\x^*_{j}}_{j\in C\setminus i} \\ \forall i\in\cV.
\end{multline}
Let's take a closer look at this triple sum. The double sum $$\sum_{\alpha=d}^D \sum_{\substack{C\in\cC \\ C = (i_1\ldots i_{d-1}ii_{d+1}\ldots i_\alpha)}}$$ basically means \emph{iterating through all cliques whose sizes are $\ge d$ and whose $d$\textsuperscript{th} node is $i$}. Obviously the condition ``sizes $\ge d$'' is redundant here, thus the above means \emph{iterating through all cliques whose $d$\textsuperscript{th} node is $i$}. Combined with $\sum_{d=1}^D$, the above triple sum means \emph{for each size $d$, iterating through all cliques whose $d$\textsuperscript{th} node is $i$}, which is clearly equivalent to \emph{iterating through all cliques that contain $i$}. Therefore,~\eqref{eq:p-star-sum-2} can be rewritten more compactly as
\begin{equation}
\sum_{d=1}^D \p_i^{*d} = \sum_{C\in\cC(i)} \F_{C}\bigotimes \set{\x^*_{j}}_{j\in C\setminus i} \quad \forall i\in\cV,
\end{equation}
where $\cC(i)$ is the set of cliques that contain the node $i$. Recall from~\eqref{eq:mrf-energy-partial-derivative} that the last expression is actually $\frac{\partial E(\x^*)}{\partial \x^i}$, \ie
\begin{equation}
\sum_{d=1}^D \p_i^{*d} = \frac{\partial E(\x^*)}{\partial \x^i} \quad \forall i\in\cV,
\end{equation}
or equivalently
\begin{equation}
\sum_{d=1}^D \p^{*d} = \nabla E(\x^*),
\end{equation}
which is~\eqref{eq:p-star-sum}, and this completes the proof.

\section{More details on the implemented methods}\label{append:methods}
We present additional details on PGD, FW, ADMM as well as CQP (we omit BCD since it was presented with sufficient details in the paper).

Recall that our nonconvex relaxation is to minimize
\begin{equation}\label{eq:mrf-Energy-appendix}
E(\x) = \sum_{C\in\cC} \F_{C} \bigotimes\set{\x_i}_{i\in C}\tag{10}
\end{equation}
subject to $\x\in\cX := \set{\x \ \Big| \ \mathbf{1}^\top\x_i = 1, \x_i\ge\mathbf{0}	\ \forall i\in\cV}.$

\subsection{PGD and FW}
Recall from Section~\ref{sec:gradient-methods} that the main update steps in PGD and FW are respectively
\begin{equation}\label{eq:update-pgd}
\s^{(k)}= \argmin_{\s\in\cX} \norm{\x^{(k)} - \beta^{(k)}\nabla E(\x^{(k)}) - \s}^2,
\end{equation}
and
\begin{equation}\label{eq:update-fw}
\s^{(k)}= \argmin_{\s\in\cX} \s^\top\nabla E(\x^{(k)}).
\end{equation}

Clearly, in the PGD update step~\eqref{eq:update-pgd} the vector $\s^{(k)}$ is the projection of $\x^{(k)} - \beta^{(k)}\nabla E(\x^{(k)})$ onto $\cX$. As we have discussed at the end of Section~\eqref{sec:admm}, this projection is reduced to independent projections onto the simplex $\set{\x_i \ | \ \mathbf{1}^\top\x_i = 1, \x_i\ge\mathbf{0}}$ for each node $i$. In our implementation we used the method introduced in~\cite{condat2016fast} for this simplex projection task.

The FW update step~\eqref{eq:update-fw} can be solved independently for each node as well:
\begin{equation}
\s_i^{(k)}= \argmin_{\mathbf{1}^\top\s_i = 1, \s_i\ge\mathbf{0}} \s_i^\top\frac{\partial E(\x^{(k)})}{\partial \x_i}\quad\forall i\in\cV,
\end{equation}
which is similar to the BCD update step~\eqref{eq:BCD-update} and thus can be solved using Lemma~\ref{lem:bcd}.

Next, we describe the line-search procedure~\eqref{eq:line-search} for these methods. Before going into details, we should note that in addition to line-search, we also implemented other step-size update rules such as \emph{diminishing} or \emph{Armijo} ones. However, we found that these rules do not work as well as line-search (the diminishing rule converges slowly while the search in the Armijo rule is expensive). We refer to~\cite{bertsekas1999nonlinear} (Chapter 2) for further details on these rules.

\paragraph{Line search}
The line-search step consists of finding
\begin{equation}\label{eq:line-search-appendix}
\alpha^{(k)} = \argmin_{0\le \alpha\le 1}E\left(\x^{(k)} + \alpha\r^{(k)}\right),\end{equation}
where $\r^{(k)} = \s^{(k)} - \x^{(k)}$. The term $E\left(\x^{(k)} + \alpha\r^{(k)}\right)$ is clearly a $D$\textsuperscript{th}-degree polynomial of $\alpha$ (recall that $D$ is the degree of the MRF), which we denote $p(\alpha)$. If we can determine the coefficients of $p(\alpha)$, then~\eqref{eq:line-search-appendix} can be solved efficiently. In particular, if $D \le 3$ then~\eqref{eq:line-search-appendix} has simple closed-form solutions (since the derivative of a $3$\textsuperscript{rd}-order polynomial is a $2$\textsuperscript{nd}-order one, which has simple closed-form solutions). For $D > 3$ we find that it is efficient enough to perform an exhaustive search over the interval $[0,1]$ (with some increment value $\delta$) for the best value of $\alpha$. In the implementation we used $\delta = 0.0001$.

Now let us describe how to find the coefficients of $p(\alpha)$.

For pairwise MRFs (\ie $D=2$), the energy is
\begin{equation}\label{eq:energy-pairwise}
	E_\pw(\x) = \sum_{i\in \cV} \F_i^\top \x_i + \sum_{ij\in\cE} \x_i^\top\F_{ij}\x_j,
\end{equation}
where $\cE$ is the set of edges, and thus
\begin{align}
	&p(\alpha) = E_\pw(\x + \alpha\r) \\
	= &\sum_{i\in \cV} \F_i^\top (\x_i + \alpha\r_i) + \sum_{ij\in\cE} (\x_i + \alpha\r_i)^\top\F_{ij}(\x_j + \alpha\r_j)\\
	= &A\alpha^2 + B\alpha + C,
\end{align}
where
\begin{align}
A &= \sum_{ij\in\cE} \r_i^\top\F_{ij}\r_j \\
B &= \sum_{i\in \cV} \F_i^\top \r_i + \sum_{ij\in\cE}\left( \x_i^\top\F_{ij}\r_j + \r_i^\top\F_{ij}\x_j\right)\\
C &= E_\pw(\x).
\end{align}

For higher-order MRFs, the analytical expressions of the polynomial coefficients are very complicated. Instead, we can find them numerically as follows. Since $p(\alpha)$ is a $D$\textsuperscript{th}-degree polynomial, it has $D+1$ coefficients, where the constant coefficient is already known:
\begin{equation}
p(0) = E(\x^{(k)}).
\end{equation}
It remains $D$ unknown coefficients, which can be computed if we have $D$ equations. Indeed, if we evaluate $p(\alpha)$ at $D$ different random values of $\alpha$ (which must be different than $0$), then we obtain $D$ linear equations whose variables are the coefficients of $p(\alpha)$. Solving this system of linear equations we get the values of these coefficients. This procedure requires $D$ evaluations of the energy $E\left(\x^{(k)} + \alpha\r^{(k)}\right)$, but we find that it is efficient enough in practice.

\subsection{Convex QP relaxation}
This relaxation was presented in~\cite{ravikumar2006quadratic} for pairwise MRFs~\eqref{eq:energy-pairwise}. Define:
\begin{equation}
d_i(s) = \sum_{j\in\cN(i)}\sum_{t\in\cS_j} \frac{1}{2}\abs{f_{ij}(s,t)}.
\end{equation}
Denote $\d_i = (d_i(s))_{s\in\cS_i}$ and $\D_i = \diag(\d_i)$, the diagonal matrix composed by $\d_i$. The convex QP relaxation energy is given by
\begin{align}
E_\text{cqp}(\x) &= E_\pw(\x) -\sum_{i\in \cV} \d_i^\top \x_i + \sum_{i\in \cV} \x_i^\top \D_i \x_i.
\end{align}
This convex energy can be minimized using different methods. Here we propose to solve it using Frank-Wolfe algorithm, which has the guarantee to reach the global optimum.

Similarly to the previous nonconvex Frank-Wolfe algorithm, the update step~\eqref{eq:update-fw} can be solved using Lemma~\ref{lem:bcd}, and the line-search has closed-form solutions:
\begin{align}
E_\text{cqp}(\x + \alpha\r) = &E_\pw(\x + \alpha\r) - \sum_{i\in \cV} \d_i^\top (\x_i + \alpha\r_i)\notag \\
&+ \sum_{i\in \cV} (\x_i + \alpha\r_i)^\top \D_i (\x_i + \alpha\r_i) \\
= &A'\alpha^2 + B'\alpha + C',
\end{align}
where
\begin{align}
A' &= A + \sum_{i\in\cV} \r_i^\top\D_i\r_i \\
B' &= B + \sum_{i\in\cV}\left(-\d_i^\top\r_i + \r_i^\top\D_i\x_i + \x_i^\top\D_i\r_i\right)\\
C' &= C + \sum_{i\in\cV}\left(-\d_i^\top\x_i + \x_i^\top\D_i\x_i\right).
\end{align}

\subsection{ADMM}\label{append:admm}
In this section, we give more details on the instantiation of ADMM into different decompositions. As we have seen in Section~\ref{sec:admm}, there is an infinite number of such decompositions. Some examples include:
\begin{alignat}{3}
&\text{(cyclic)}&&\x^{d-1}	&&=\x^d, \quad d = 2,\ldots, D,\label{eq:cyclic-append}\\
&\text{(star)}&&\x^1 		&&=\x^d,\quad d = 2,\ldots, D,\label{eq:star-append}\\
&\text{(symmetric)}\quad&&\x^d 		&&=(\x^1+\cdots+\x^D)/D\quad \forall d.\label{eq:symmetric-append}
\end{alignat}
Let us consider for example the \emph{cyclic} decomposition. We obtain the following problem, equivalent to~\eqref{prob:relax}:
\begin{equation}\label{prob:mrf-decomposed-cyclic}
\begin{aligned} 
\mbox{min}\quad & F(\x^1,\x^2,\ldots,\x^D)\\ 
\mbox{s.t.}\quad & \x^{d-1} = \x^d,\quad d=2,\ldots,D,\\
&\x^d\in\cX^d, \qquad d=1,\ldots,D,
\end{aligned}
\end{equation}
where $\cX^1,\ldots,\cX^D$ are closed convex sets satisfying $\cX^1\cap\cX^2\cap\cdots\cap\cX^D = \cX$, and $F$ is defined by~\eqref{eq:energy-decomposed}.

The augmented Lagrangian of this problem is:
\begin{multline}\label{eq:lagrangian-cyclic}
	L_\rho(\x^1,\ldots,\x^D,\y) = F(\x^1,\ldots,\x^D) \\ + \sum_{d=2}^D \inner{\y^d,\x^{d-1} - \x^d} + \frac{\rho}{2}\sum_{d=2}^D\norm{\x^{d-1} - \x^d}^2,
\end{multline}
where $\y=(\y^2,\ldots,\y^D)$. The $\y$ update~\eqref{eq:update-y} becomes
\begin{equation}\label{eq:update-y-cyclic}
		\y^{d^{(k+1)}} = \y^{d^{(k)}} + \rho\left(\x^{d-1^{(k+1)}} - \x^{d^{(k+1)}}\right).
	\end{equation}
Consider the $\x$ update~\eqref{eq:update-x}. Plugging~\eqref{eq:energy-factor-linear} into~\eqref{eq:lagrangian-cyclic}, expanding and regrouping, we obtain that $L_\rho(\x^1,\ldots,\x^D,\y)$ is equal to each of the following expressions:
\begin{align}
&\frac{\rho}{2}\norm{\x^1}^2 - \inner{\x^1, \rho\x^2-\y^2-\p^1} + \mathrm{cst}(\x^1),\\
&\rho\norm{\x^d}^2 - \inner{\x^d,\rho\x^{d-1} + \rho\x^{d+1} + \y^d - \y^{d+1} -\p^d } \notag\\ 
&\qquad\qquad\qquad + \mathrm{cst}(\x^d) \qquad(2\le d\le D-1),\\
&\frac{\rho}{2}\norm{\x^D}^2 - \inner{\x^D, \rho\x^{D-1}+\y^D-\p^D} + \mathrm{cst}(\x^D).
\end{align}
From this, it is straightforward to see that the $\x$ update~\eqref{eq:update-x} is reduced to~\eqref{eq:projection} where $(\c_d)_{1\le d \le D}$ are defined by~\eqref{eq:cyclic-c1},~\eqref{eq:cyclic-cd} and~\eqref{eq:cyclic-cD}.

It is straightforward to obtain similar results for the other decompositions.

\begin{figure*}[!htb]
 \centering
     \begin{subfigure}[t]{0.18\linewidth}
     \centering
         \includegraphics[width=\linewidth]{cones_disp2}\vspace{-5pt}
         \caption{\scriptsize Ground-truth}
     \end{subfigure}%
     ~
     \begin{subfigure}[t]{0.18\linewidth}
     \centering
             \includegraphics[width=\linewidth]{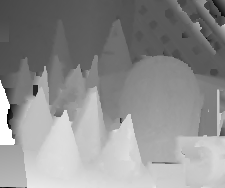}\vspace{-5pt}
             \caption{\scriptsize $\alpha$-Fusion (18582.85)}
         \end{subfigure}%
     ~
     \begin{subfigure}[t]{0.18\linewidth}
     \centering
             \includegraphics[width=\linewidth]{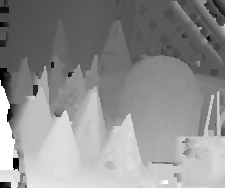}\vspace{-5pt}
             \caption{\scriptsize TRBP (18640.25)}
         \end{subfigure}%
     ~
     \begin{subfigure}[t]{0.18\linewidth}
    \centering
            \includegraphics[width=\linewidth]{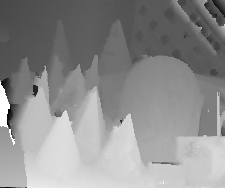}\vspace{-5pt}
            \caption{\scriptsize AD3 (18763.13)}
        \end{subfigure}%
     ~
     \begin{subfigure}[t]{0.18\linewidth}
     \centering
             \includegraphics[width=\linewidth]{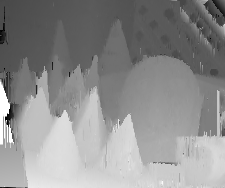}\vspace{-5pt}
             \caption{\scriptsize BUNDLE (20055.65)}
         \end{subfigure}\\%
     \begin{subfigure}[t]{0.18\linewidth}
     \centering
             \includegraphics[width=\linewidth]{cones_small_srmp}\vspace{-5pt}
             \caption{\scriptsize SRMP (\textbf{18433.01})}
         \end{subfigure}%
     ~
     \begin{subfigure}[t]{0.18\linewidth}
     \centering
             \includegraphics[width=\linewidth]{cones_small_bcd0}\vspace{-5pt}
             \caption{\scriptsize BCD (18926.70)}
         \end{subfigure}%
     ~
     \begin{subfigure}[t]{0.18\linewidth}
     \centering
             \includegraphics[width=\linewidth]{cones_small_fw0}\vspace{-5pt}
             \caption{\scriptsize FW (18776.26)}
         \end{subfigure}%
     ~
     \begin{subfigure}[t]{0.18\linewidth}
    \centering
            \includegraphics[width=\linewidth]{cones_small_pgd0}\vspace{-5pt}
            \caption{\scriptsize PGD (19060.17)}
        \end{subfigure}%
     ~
     \begin{subfigure}[t]{0.18\linewidth}
     \centering
             \includegraphics[width=\linewidth]{cones_small_admm0}\vspace{-5pt}
             \caption{\scriptsize ADMM (18590.87)}
         \end{subfigure}
     \caption{\label{fig:stereo-full}Resulted disparity maps and energy values using second-order MRFs for the \emph{cones} scene of the Middlebury stereo dataset~\cite{scharstein2003high}.}
 \end{figure*}

\section{Details on the experiments}\label{append:experiments}

We replicated the model presented in~\cite{woodford2009global} for the second-order stereo experiment, with some simplifications: we only used segmentation proposals (denoted by \emph{SegPln} in~\cite{woodford2009global}) and omitted the binary visibility variables and edges, so that all the nodes have the same number of labels. We ran the code provided by~\cite{woodford2009global} to get the unary potentials as well as the 14 proposals, and then built the MRF model using OpenGM~\cite{opengm-library}. An example of resulted disparity maps for the \emph{cones} scene of the Middlebury stereo dataset~\cite{scharstein2003high} is given in Figure~\ref{fig:stereo-full}.

For further details on the other modes, we refer to~\cite{kappes2015ijcv}.

The detailed results of the experiments are provided at the end of this document.

\onecolumn
{\scriptsize\begin{longtable}{|l|l|r|r|r|r|r|r|r|r|} 
\caption[]{inpainting-n4}
\\\hline
\textbf{inpainting-n4}& & FastPD& $\alpha$-Exp& TRBP& ADDD& MPLP& MPLP-C& TRWS& BUNDLE\\\hline
\endfirsthead
\hline
\textbf{inpainting-n4}& & FastPD& $\alpha$-Exp& TRBP& ADDD& MPLP& MPLP-C& TRWS& BUNDLE\\\hline
\endhead
triplepoint4-plain-ring-inverse& value& 424.90& \bf 424.12& 475.95& 482.23& 508.94& 453.17& 496.37& 425.90\\\nopagebreak
& bound& 205.21& -Inf& -Inf& 402.83& 339.29& 411.48& 411.59& \bf 411.87\\\nopagebreak
& runtime& 0.03& 0.02& 33.04& 28.59& 1.75& 3615.97& 2.15& 44.41\\\hline
triplepoint4-plain-ring& value& \bf 484.59& \bf 484.59& \bf 484.59& \bf 484.59& 485.38& \bf 484.59& \bf 484.59& \bf 484.59\\\nopagebreak
& bound& 384.57& -Inf& -Inf& \bf 484.59& 484.58& 484.59& \bf 484.59& 484.59\\\nopagebreak
& runtime& 0.03& 0.02& 13.85& 3.15& 108.89& 118.43& 0.59& 27.96\\\hline
\hline
\textbf{mean energy} & & 454.75& 454.35& 480.27& 483.41& 497.16& 468.88& 467.70& 455.25\\\hline
\textbf{mean bound} & & 294.89& -Inf& -Inf& 443.71& 411.94& 448.03& 448.09& 448.23\\\hline
\textbf{mean runtime} & & 0.03& 0.02& 23.45& 15.87& 55.32& 1867.20& 1.37& 36.18\\\hline
\textbf{best value} & & 50.00& 100.00& 50.00& 50.00& 0.00& 50.00& 50.00& 50.00\\\hline
\textbf{best bound} & & 0.00& 0.00& 0.00& 50.00& 0.00& 0.00& 50.00& 50.00\\\hline
\textbf{verified opt} & & 0.00& 0.00& 0.00& 50.00& 0.00& 0.00& 50.00& 0.00\\\hline
\end{longtable}}

{\scriptsize\begin{longtable}{|l|l|r|r|r|r|r|} 
\caption[]{inpainting-n4}
\\\hline
\textbf{inpainting-n4}& & CQP& ADMM& BCD& FW& PGD\\\hline
\endfirsthead
\hline
\textbf{inpainting-n4}& & CQP& ADMM& BCD& FW& PGD\\\hline
\endhead
triplepoint4-plain-ring-inverse& value& 2256.45& \bf 424.12& 443.18& 443.18& 444.75\\\nopagebreak
& bound& -Inf& -Inf& -Inf& -Inf& -Inf\\\nopagebreak
& runtime& 2.60& 7.69& 0.11& 1.05& 0.77\\\hline
triplepoint4-plain-ring& value& 542.57& \bf 484.59& 528.57& 533.29& 534.86\\\nopagebreak
& bound& -Inf& -Inf& -Inf& -Inf& -Inf\\\nopagebreak
& runtime& 1.24& 12.00& 0.11& 1.15& 0.85\\\hline
\hline
\textbf{mean energy} & & 490.09& 454.35& 485.88& 488.23& 489.80\\\hline
\textbf{mean bound} & & -Inf& -Inf& -Inf& -Inf& -Inf\\\hline
\textbf{mean runtime} & & 1.92& 9.84& 0.11& 1.10& 0.81\\\hline
\textbf{best value} & & 0.00& 100.00& 0.00& 0.00& 0.00\\\hline
\textbf{best bound} & & 0.00& 0.00& 0.00& 0.00& 0.00\\\hline
\textbf{verified opt} & & 0.00& 0.00& 0.00& 0.00& 0.00\\\hline
\end{longtable}}

{\scriptsize\begin{longtable}{|l|l|r|r|r|r|r|r|r|r|} 
\caption[]{inpainting-n8}
\\\hline
\textbf{inpainting-n8}& & $\alpha$-Exp& FastPD& TRBP& ADDD& MPLP& MPLP-C& BUNDLE& TRWS\\\hline
\endfirsthead
\hline
\textbf{inpainting-n8}& & $\alpha$-Exp& FastPD& TRBP& ADDD& MPLP& MPLP-C& BUNDLE& TRWS\\\hline
\endhead
triplepoint4-plain-ring-inverse& value& 434.84& 434.84& 496.40& 714.42& 442.42& 463.88& 435.32& 504.97\\\nopagebreak
& bound& -Inf& 0.00& -Inf& 406.71& 412.37& 413.49& \bf 415.83& 413.20\\\nopagebreak
& runtime& 0.90& 0.19& 97.95& 57.01& 1107.98& 3660.44& 112.91& 16.09\\\hline
triplepoint4-plain-ring& value& \bf 495.20& \bf 495.20& \bf 495.20& 495.85& 495.52& \bf 495.20& \bf 495.20& \bf 495.20\\\nopagebreak
& bound& -Inf& 272.56& -Inf& 495.18& 494.72& \bf 495.20& 495.04& 494.71\\\nopagebreak
& runtime& 0.67& 0.11& 30.04& 14.56& 581.96& 884.34& 110.56& 16.37\\\hline
\hline
\textbf{mean energy} & & 465.02& 465.02& 494.02& 605.14& 468.83& 469.78& 465.26& 466.80\\\hline
\textbf{mean bound} & & -Inf& 136.28& -Inf& 450.95& 453.55& 454.35& 455.43& 453.96\\\hline
\textbf{mean runtime} & & 0.78& 0.15& 64.00& 35.78& 844.97& 2272.39& 111.74& 16.23\\\hline
\textbf{best value} & & 50.00& 50.00& 50.00& 0.00& 0.00& 50.00& 50.00& 50.00\\\hline
\textbf{best bound} & & 0.00& 0.00& 0.00& 0.00& 0.00& 50.00& 50.00& 0.00\\\hline
\textbf{verified opt} & & 0.00& 0.00& 0.00& 0.00& 0.00& 0.00& 0.00& 0.00\\\hline
\end{longtable}}

{\scriptsize\begin{longtable}{|l|l|r|r|r|r|r|} 
\caption[]{inpainting-n8}
\\\hline
\textbf{inpainting-n8}& & CQP& ADMM& BCD& FW& PGD\\\hline
\endfirsthead
\hline
\textbf{inpainting-n8}& & CQP& ADMM& BCD& FW& PGD\\\hline
\endhead
triplepoint4-plain-ring-inverse& value& 1819.57& \bf 434.32& 438.95& 446.19& 446.19\\\nopagebreak
& bound& -Inf& -Inf& -Inf& -Inf& -Inf\\\nopagebreak
& runtime& 20.78& 38.71& 0.31& 6.33& 6.55\\\hline
triplepoint4-plain-ring& value& 538.25& \bf 495.20& 524.94& 533.45& 533.45\\\nopagebreak
& bound& -Inf& -Inf& -Inf& -Inf& -Inf\\\nopagebreak
& runtime& 2.46& 42.57& 0.28& 5.55& 3.83\\\hline
\hline
\textbf{mean energy} & & 489.82& 464.76& 481.95& 489.82& 489.82\\\hline
\textbf{mean bound} & & -Inf& -Inf& -Inf& -Inf& -Inf\\\hline
\textbf{mean runtime} & & 11.62& 40.64& 0.29& 5.94& 5.19\\\hline
\textbf{best value} & & 0.00& 100.00& 0.00& 0.00& 0.00\\\hline
\textbf{best bound} & & 0.00& 0.00& 0.00& 0.00& 0.00\\\hline
\textbf{verified opt} & & 0.00& 0.00& 0.00& 0.00& 0.00\\\hline
\end{longtable}}

{\scriptsize\begin{longtable}{|l|l|r|r|r|r|r|r|r|r|} 
\caption[]{matching}
\\\hline
\textbf{matching}& & TRBP& ADDD& MPLP& MPLP-C& BUNDLE& TRWS& CQP& ADMM\\\hline
\endfirsthead
\hline
\textbf{matching}& & TRBP& ADDD& MPLP& MPLP-C& BUNDLE& TRWS& CQP& ADMM\\\hline
\endhead
matching0& value& 60000000075.71& 200000000047.27& 90000000059.69& \bf 19.36& 58.64& 61.05& 118.90& 42.09\\\nopagebreak
& bound& -Inf& 11.56& 10.96& \bf 19.36& 11.27& 11.02& -Inf& -Inf\\\nopagebreak
& runtime& 0.00& 2.45& 0.22& 8.02& 1.09& 0.04& 0.06& 0.02\\\hline
matching1& value& 170000000090.50& 70000000031.36& 50000000030.34& \bf 23.58& 10000000021.89& 102.20& 138.99& 107.31\\\nopagebreak
& bound& -Inf& 20.13& 18.47& \bf 23.58& 17.48& 18.52& -Inf& -Inf\\\nopagebreak
& runtime& 0.00& 3.82& 0.52& 4.52& 2.70& 0.04& 0.10& 0.94\\\hline
matching2& value& 110000000096.00& 20000000026.59& 30000000025.18& \bf 26.08& 20000000043.93& 51.59& 156.46& 107.41\\\nopagebreak
& bound& -Inf& 22.97& 21.07& \bf 26.08& 19.87& 21.18& -Inf& -Inf\\\nopagebreak
& runtime& 0.00& 4.12& 0.94& 8.25& 3.56& 0.12& 0.08& 0.26\\\hline
matching3& value& 80000000066.03& 130000000051.70& 90000000051.81& \bf 15.86& 10000000042.82& 41.92& 93.67& 43.69\\\nopagebreak
& bound& -Inf& 10.72& 10.15& \bf 15.86& 9.25& 10.14& -Inf& -Inf\\\nopagebreak
& runtime& 0.00& 2.25& 0.21& 3.36& 1.96& 0.01& 0.07& 0.02\\\hline
\hline
\textbf{mean energy} & & 97500000064.52& 105000000039.23& 65000000041.76& 21.22& 10000000041.82& 63.52& 127.01& 75.12\\\hline
\textbf{mean bound} & & -Inf& 16.35& 15.16& 21.22& 14.47& 15.22& -Inf& -Inf\\\hline
\textbf{mean runtime} & & 0.00& 3.16& 0.47& 6.04& 2.33& 0.05& 0.08& 0.31\\\hline
\textbf{best value} & & 0.00& 0.00& 0.00& 100.00& 0.00& 0.00& 0.00& 0.00\\\hline
\textbf{best bound} & & 0.00& 0.00& 0.00& 100.00& 0.00& 0.00& 0.00& 0.00\\\hline
\textbf{verified opt} & & 0.00& 0.00& 0.00& 100.00& 0.00& 0.00& 0.00& 0.00\\\hline
\end{longtable}}

{\scriptsize\begin{longtable}{|l|l|r|r|r|} 
\caption[]{matching}
\\\hline
\textbf{matching}& & BCD& FW& PGD\\\hline
\endfirsthead
\hline
\textbf{matching}& & BCD& FW& PGD\\\hline
\endhead
matching0& value& 43.61& 56.10& 49.45\\\nopagebreak
& bound& -Inf& -Inf& -Inf\\\nopagebreak
& runtime& 0.00& 0.19& 8.08\\\hline
matching1& value& 118.00& 77.31& 79.01\\\nopagebreak
& bound& -Inf& -Inf& -Inf\\\nopagebreak
& runtime& 0.00& 23.66& 21.36\\\hline
matching2& value& 139.74& 89.46& 62.40\\\nopagebreak
& bound& -Inf& -Inf& -Inf\\\nopagebreak
& runtime& 0.00& 55.74& 19.28\\\hline
matching3& value& 38.09& 43.98& 43.21\\\nopagebreak
& bound& -Inf& -Inf& -Inf\\\nopagebreak
& runtime& 0.00& 0.81& 4.11\\\hline
\hline
\textbf{mean energy} & & 84.86& 66.71& 58.52\\\hline
\textbf{mean bound} & & -Inf& -Inf& -Inf\\\hline
\textbf{mean runtime} & & 0.00& 20.10& 13.21\\\hline
\textbf{best value} & & 0.00& 0.00& 0.00\\\hline
\textbf{best bound} & & 0.00& 0.00& 0.00\\\hline
\textbf{verified opt} & & 0.00& 0.00& 0.00\\\hline
\end{longtable}}

{\scriptsize\begin{longtable}{|l|l|r|r|r|r|r|r|r|r|} 
\caption[]{mrf-stereo}
\\\hline
\textbf{mrf-stereo}& & FastPD& $\alpha$-Exp& TRBP& ADDD& MPLP& MPLP-C& TRWS& BUNDLE\\\hline
\endfirsthead
\hline
\textbf{mrf-stereo}& & FastPD& $\alpha$-Exp& TRBP& ADDD& MPLP& MPLP-C& TRWS& BUNDLE\\\hline
\endhead
ted-gm& value& 1344017.00& \bf 1343176.00& 1460166.00& NaN& NaN& NaN& 1346202.00& 1563172.00\\\nopagebreak
& bound& 395613.00& -Inf& -Inf& NaN& NaN& NaN& \bf 1337092.22& 1334223.01\\\nopagebreak
& runtime& 14.94& 29.75& 3616.74& NaN& NaN& NaN& 391.34& 3530.00\\\hline
tsu-gm& value& 370825.00& 370255.00& 411157.00& 455874.00& 369304.00& 369865.00& 369279.00& \bf 369218.00\\\nopagebreak
& bound& 31900.00& -Inf& -Inf& 299780.16& 367001.47& 366988.29& 369217.58& \bf 369218.00\\\nopagebreak
& runtime& 1.72& 3.64& 1985.50& 1066.79& 4781.02& 4212.26& 393.76& 670.81\\\hline
ven-gm& value& 3127923.00& 3138157.00& 3122190.00& NaN& NaN& NaN& \bf 3048404.00& 3061733.00\\\nopagebreak
& bound& 475665.00& -Inf& -Inf& NaN& NaN& NaN& \bf 3047929.95& 3047785.37\\\nopagebreak
& runtime& 4.76& 10.87& 2030.13& NaN& NaN& NaN& 478.49& 1917.58\\\hline
\hline
\textbf{mean energy} & & 1614255.00& 1617196.00& 1664504.33& NaN& NaN& NaN& 1587596.67& 1664707.67\\\hline
\textbf{mean bound} & & 301059.33& -Inf& -Inf& NaN& NaN& NaN& 1584746.58& 1583742.13\\\hline
\textbf{mean runtime} & & 7.14& 14.75& 2544.12& NaN& NaN& NaN& 421.20& 2039.47\\\hline
\textbf{best value} & & 0.00& 33.33& 0.00& 0.00& 0.00& 0.00& 33.33& 33.33\\\hline
\textbf{best bound} & & 0.00& 0.00& 0.00& 0.00& 0.00& 0.00& 66.67& 33.33\\\hline
\textbf{verified opt} & & 0.00& 0.00& 0.00& 0.00& 0.00& 0.00& 0.00& 0.00\\\hline
\end{longtable}}

{\scriptsize\begin{longtable}{|l|l|r|r|r|r|r|} 
\caption[]{mrf-stereo}
\\\hline
\textbf{mrf-stereo}& & CQP& ADMM& BCD& FW& PGD\\\hline
\endfirsthead
\hline
\textbf{mrf-stereo}& & CQP& ADMM& BCD& FW& PGD\\\hline
\endhead
ted-gm& value& 4195611.00& 1373030.00& 3436281.00& 3020579.00& 2694493.00\\\nopagebreak
& bound& -Inf& -Inf& -Inf& -Inf& -Inf\\\nopagebreak
& runtime& 3602.97& 3628.80& 15.64& 1740.10& 2109.65\\\hline
tsu-gm& value& 3621062.00& 375954.00& 2722934.00& 2352499.00& 2114223.00\\\nopagebreak
& bound& -Inf& -Inf& -Inf& -Inf& -Inf\\\nopagebreak
& runtime& 3600.79& 807.70& 5.33& 622.64& 120.38\\\hline
ven-gm& value& 26408665.00& 3123334.00& 14907352.00& 13114176.00& 10818561.00\\\nopagebreak
& bound& -Inf& -Inf& -Inf& -Inf& -Inf\\\nopagebreak
& runtime& 3602.28& 2696.49& 11.48& 3604.63& 2298.42\\\hline
\hline
\textbf{mean energy} & & 11408446.00& 1624106.00& 7022189.00& 6162418.00& 5209092.33\\\hline
\textbf{mean bound} & & -Inf& -Inf& -Inf& -Inf& -Inf\\\hline
\textbf{mean runtime} & & 3602.01& 2377.66& 10.82& 1989.12& 1509.49\\\hline
\textbf{best value} & & 0.00& 0.00& 0.00& 0.00& 0.00\\\hline
\textbf{best bound} & & 0.00& 0.00& 0.00& 0.00& 0.00\\\hline
\textbf{verified opt} & & 0.00& 0.00& 0.00& 0.00& 0.00\\\hline
\end{longtable}}

{\scriptsize\begin{longtable}{|l|l|r|r|r|r|r|r|r|r|} 
\caption[]{inclusion}
\\\hline
\textbf{inclusion}& & $\alpha$-Fusion& TRBP& ADDD& MPLP& MPLP-C& BUNDLE& SRMP& ADMM\\\hline
\endfirsthead
\hline
\textbf{inclusion}& & $\alpha$-Fusion& TRBP& ADDD& MPLP& MPLP-C& BUNDLE& SRMP& ADMM\\\hline
\endhead
modelH-1-0.8-0.2& value& 1595.06& 1416.07& 2416.58& 3416.08& 5415.89& 5427.91& \bf 1415.94& \bf 1415.94\\\nopagebreak
& bound& -Inf& -Inf& 1415.71& 1415.70& 1415.71& 1406.09& \bf 1415.94& -Inf\\\nopagebreak
& runtime& 0.06& 21.93& 10.52& 12.17& 3843.79& 99.77& 0.11& 106.70\\\hline
modelH-10-0.8-0.2& value& 1590.97& 1416.80& 3415.92& 5415.13& 4415.43& 5422.47& \bf 1416.10& 1416.24\\\nopagebreak
& bound& -Inf& -Inf& 1415.68& 1415.62& 1415.70& 1404.47& \bf 1416.10& -Inf\\\nopagebreak
& runtime& 0.05& 22.66& 1.20& 12.03& 3797.40& 91.16& 0.13& 85.46\\\hline
modelH-2-0.8-0.2& value& 1603.85& 1423.42& 4423.49& 6422.84& 3423.03& 5436.16& \bf 1422.89& \bf 1422.89\\\nopagebreak
& bound& -Inf& -Inf& 1422.79& 1422.78& 1422.79& 1411.56& \bf 1422.89& -Inf\\\nopagebreak
& runtime& 0.05& 21.34& 10.00& 6.67& 4051.20& 101.83& 0.11& 113.24\\\hline
modelH-3-0.8-0.2& value& 1596.11& \bf 1381.14& \bf 1381.14& \bf 1381.14& \bf 1381.14& 4389.78& \bf 1381.14& 1381.19\\\nopagebreak
& bound& -Inf& -Inf& \bf 1381.14& 1381.14& 1381.14& 1371.29& \bf 1381.14& -Inf\\\nopagebreak
& runtime& 0.06& 8.02& 4.50& 7.79& 8.84& 112.52& 0.11& 63.51\\\hline
modelH-4-0.8-0.2& value& 1595.12& 1427.56& 5427.63& 5426.48& 3427.27& 2432.97& \bf 1427.17& \bf 1427.17\\\nopagebreak
& bound& -Inf& -Inf& 1426.58& 1426.56& 1426.58& 1416.80& \bf 1427.17& -Inf\\\nopagebreak
& runtime& 0.04& 21.18& 9.40& 8.29& 3892.65& 116.38& 0.13& 125.01\\\hline
modelH-5-0.8-0.2& value& 1566.58& 3383.89& 6383.61& 4383.52& 6382.77& 4390.47& \bf 1383.69& 1383.77\\\nopagebreak
& bound& -Inf& -Inf& 1383.25& 1383.23& 1383.30& 1371.94& \bf 1383.69& -Inf\\\nopagebreak
& runtime& 0.04& 21.05& 8.45& 5.44& 3902.54& 112.86& 0.18& 99.08\\\hline
modelH-6-0.8-0.2& value& 1588.33& 2402.30& 2402.17& 2402.60& 5401.70& 3406.27& \bf 1402.34& 1402.60\\\nopagebreak
& bound& -Inf& -Inf& 1402.01& 1401.77& 1402.01& 1393.05& \bf 1402.34& -Inf\\\nopagebreak
& runtime& 0.03& 20.80& 2.69& 22.61& 3778.21& 101.74& 0.11& 126.40\\\hline
modelH-7-0.8-0.2& value& 1583.36& 1403.61& 3403.70& 5402.97& 5403.24& 6418.08& \bf 1403.25& 1403.69\\\nopagebreak
& bound& -Inf& -Inf& 1403.08& 1403.07& 1403.08& 1391.87& \bf 1403.25& -Inf\\\nopagebreak
& runtime& 0.04& 20.80& 2.50& 11.98& 4124.95& 103.95& 0.15& 94.36\\\hline
modelH-8-0.8-0.2& value& 1574.64& 3368.65& 3368.65& 3368.66& 1368.55& \bf 1368.33& \bf 1368.33& \bf 1368.33\\\nopagebreak
& bound& -Inf& -Inf& 1368.29& 1368.29& 1368.33& 1368.23& \bf 1368.33& -Inf\\\nopagebreak
& runtime& 0.05& 20.66& 11.21& 5.09& 3740.80& 92.39& 0.15& 86.69\\\hline
modelH-9-0.8-0.2& value& 1577.25& 1385.00& 1385.23& 2385.04& 3385.06& \bf 1384.86& \bf 1384.86& 1384.95\\\nopagebreak
& bound& -Inf& -Inf& 1384.82& 1384.82& 1384.82& 1384.81& \bf 1384.86& -Inf\\\nopagebreak
& runtime& 0.03& 3.61& 3.15& 4.75& 3824.62& 82.98& 0.11& 73.29\\\hline
\hline
\textbf{mean energy} & & 1587.13& 1441.43& 1694.72& 3300.67& 2800.54& 4007.73& 1400.57& 1400.68\\\hline
\textbf{mean bound} & & -Inf& -Inf& 1400.33& 1400.30& 1400.35& 1392.01& 1400.57& -Inf\\\hline
\textbf{mean runtime} & & 0.05& 18.20& 6.36& 9.68& 3496.50& 101.56& 0.13& 97.37\\\hline
\textbf{best value} & & 0.00& 10.00& 10.00& 10.00& 10.00& 20.00& 100.00& 40.00\\\hline
\textbf{best bound} & & 0.00& 0.00& 10.00& 0.00& 0.00& 0.00& 100.00& 0.00\\\hline
\textbf{verified opt} & & 0.00& 0.00& 10.00& 0.00& 0.00& 0.00& 100.00& 0.00\\\hline
\end{longtable}}

{\scriptsize\begin{longtable}{|l|l|r|r|r|} 
\caption[]{inclusion}
\\\hline
\textbf{inclusion}& & BCD& FW& PGD\\\hline
\endfirsthead
\hline
\textbf{inclusion}& & BCD& FW& PGD\\\hline
\endhead
modelH-1-0.8-0.2& value& 12435.37& 7419.38& 7421.24\\\nopagebreak
& bound& -Inf& -Inf& -Inf\\\nopagebreak
& runtime& 0.14& 44.22& 67.47\\\hline
modelH-10-0.8-0.2& value& 15446.57& 7427.81& 5424.26\\\nopagebreak
& bound& -Inf& -Inf& -Inf\\\nopagebreak
& runtime& 0.14& 2.76& 16.90\\\hline
modelH-2-0.8-0.2& value& 10430.00& 5425.92& 5425.74\\\nopagebreak
& bound& -Inf& -Inf& -Inf\\\nopagebreak
& runtime& 0.14& 11.55& 57.53\\\hline
modelH-3-0.8-0.2& value& 15397.00& 1382.80& 1382.23\\\nopagebreak
& bound& -Inf& -Inf& -Inf\\\nopagebreak
& runtime& 0.14& 20.57& 19.35\\\hline
modelH-4-0.8-0.2& value& 15447.30& 4427.73& 4427.66\\\nopagebreak
& bound& -Inf& -Inf& -Inf\\\nopagebreak
& runtime& 0.13& 8.25& 109.47\\\hline
modelH-5-0.8-0.2& value& 9391.02& 6385.98& 6385.44\\\nopagebreak
& bound& -Inf& -Inf& -Inf\\\nopagebreak
& runtime& 0.13& 6.26& 32.41\\\hline
modelH-6-0.8-0.2& value& 13420.27& 5407.69& 3403.83\\\nopagebreak
& bound& -Inf& -Inf& -Inf\\\nopagebreak
& runtime& 0.14& 36.05& 24.21\\\hline
modelH-7-0.8-0.2& value& 11438.71& 10411.17& 11498.09\\\nopagebreak
& bound& -Inf& -Inf& -Inf\\\nopagebreak
& runtime& 0.13& 18.45& 72.97\\\hline
modelH-8-0.8-0.2& value& 14385.72& 6376.91& 6375.75\\\nopagebreak
& bound& -Inf& -Inf& -Inf\\\nopagebreak
& runtime& 0.14& 35.24& 80.66\\\hline
modelH-9-0.8-0.2& value& 7393.92& 3386.31& 3385.93\\\nopagebreak
& bound& -Inf& -Inf& -Inf\\\nopagebreak
& runtime& 0.14& 28.90& 29.45\\\hline
\hline
\textbf{mean energy} & & 12518.59& 5805.17& 5513.02\\\hline
\textbf{mean bound} & & -Inf& -Inf& -Inf\\\hline
\textbf{mean runtime} & & 0.14& 21.23& 51.04\\\hline
\textbf{best value} & & 0.00& 0.00& 0.00\\\hline
\textbf{best bound} & & 0.00& 0.00& 0.00\\\hline
\textbf{verified opt} & & 0.00& 0.00& 0.00\\\hline
\end{longtable}}

{\scriptsize\begin{longtable}{|l|l|r|r|r|r|r|r|r|r|} 
\caption[]{stereo}
\\\hline
\textbf{stereo}& & $\alpha$-Fusion& TRBP& ADDD& MPLP& MPLP-C& BUNDLE& SRMP& ADMM\\\hline
\endfirsthead
\hline
\textbf{stereo}& & $\alpha$-Fusion& TRBP& ADDD& MPLP& MPLP-C& BUNDLE& SRMP& ADMM\\\hline
\endhead
art\_small& value& 13262.49& 13336.35& 13543.70& NaN& NaN& 15105.28& \bf 13091.20& 13297.79\\\nopagebreak
& bound& -Inf& -Inf& 12925.76& NaN& NaN& 12178.62& \bf 13069.30& -Inf\\\nopagebreak
& runtime& 50.99& 3744.91& 3096.10& NaN& NaN& 3845.89& 3603.89& 3710.92\\\hline
cones\_small& value& 18582.85& 18640.25& 18763.13& NaN& NaN& 20055.65& \bf 18433.01& 18590.87\\\nopagebreak
& bound& -Inf& -Inf& 18334.00& NaN& NaN& 17724.56& \bf 18414.29& -Inf\\\nopagebreak
& runtime& 48.89& 3660.77& 7506.15& NaN& NaN& 3814.74& 3603.11& 3659.15\\\hline
teddy\_small& value& 14653.53& 14680.21& 14804.46& NaN& NaN& 15733.15& \bf 14528.74& 14715.83\\\nopagebreak
& bound& -Inf& -Inf& 14374.12& NaN& NaN& 13981.71& \bf 14518.03& -Inf\\\nopagebreak
& runtime& 50.99& 3670.35& 3535.79& NaN& NaN& 3820.05& 3603.49& 3620.84\\\hline
venus\_small& value& 9644.78& 9692.80& 9796.44& NaN& NaN& 9990.68& \bf 9606.34& 9669.62\\\nopagebreak
& bound& -Inf& -Inf& 9377.05& NaN& NaN& 9402.97& \bf 9601.86& -Inf\\\nopagebreak
& runtime& 49.24& 3627.58& 3761.29& NaN& NaN& 3774.66& 3603.14& 3657.60\\\hline
\hline
\textbf{mean energy} & & 14035.91& 14087.40& 14226.93& NaN& NaN& 15221.19& 13914.82& 14068.53\\\hline
\textbf{mean bound} & & -Inf& -Inf& 13752.73& NaN& NaN& 13321.96& 13900.87& -Inf\\\hline
\textbf{mean runtime} & & 50.03& 3675.90& 4474.83& NaN& NaN& 3813.84& 3603.41& 3662.13\\\hline
\textbf{best value} & & 0.00& 0.00& 0.00& 0.00& 0.00& 0.00& 100.00& 0.00\\\hline
\textbf{best bound} & & 0.00& 0.00& 0.00& 0.00& 0.00& 0.00& 100.00& 0.00\\\hline
\textbf{verified opt} & & 0.00& 0.00& 0.00& 0.00& 0.00& 0.00& 0.00& 0.00\\\hline
\end{longtable}}

{\scriptsize\begin{longtable}{|l|l|r|r|r|} 
\caption[]{stereo}
\\\hline
\textbf{stereo}& & BCD& FW& PGD\\\hline
\endfirsthead
\hline
\textbf{stereo}& & BCD& FW& PGD\\\hline
\endhead
art\_small& value& 13896.67& 13696.50& 13929.06\\\nopagebreak
& bound& -Inf& -Inf& -Inf\\\nopagebreak
& runtime& 60.63& 1407.07& 3648.00\\\hline
cones\_small& value& 18926.70& 18776.26& 19060.17\\\nopagebreak
& bound& -Inf& -Inf& -Inf\\\nopagebreak
& runtime& 57.40& 2111.63& 3669.24\\\hline
teddy\_small& value& 14998.31& 14891.12& 15193.23\\\nopagebreak
& bound& -Inf& -Inf& -Inf\\\nopagebreak
& runtime& 60.08& 1626.66& 3671.59\\\hline
venus\_small& value& 9767.21& 9726.27& 9992.13\\\nopagebreak
& bound& -Inf& -Inf& -Inf\\\nopagebreak
& runtime& 60.27& 1851.40& 3670.82\\\hline
\hline
\textbf{mean energy} & & 14397.22& 14272.54& 14543.65\\\hline
\textbf{mean bound} & & -Inf& -Inf& -Inf\\\hline
\textbf{mean runtime} & & 59.59& 1749.19& 3664.92\\\hline
\textbf{best value} & & 0.00& 0.00& 0.00\\\hline
\textbf{best bound} & & 0.00& 0.00& 0.00\\\hline
\textbf{verified opt} & & 0.00& 0.00& 0.00\\\hline
\end{longtable}}
\end{document}